\def\eqref#1{Equation~\ref{#1}}
\def\1{\bm{1}}
\def\rvz{{\mathbf{z}}}
\DeclareMathAlphabet{\mathsfit}{\encodingdefault}{\sfdefault}{m}{sl}
\SetMathAlphabet{\mathsfit}{bold}{\encodingdefault}{\sfdefault}{bx}{n}
\def\gB{{\mathcal{B}}}
\def\gF{{\mathcal{F}}}
\def\gG{{\mathcal{G}}}
\def\gL{{\mathcal{L}}}
\def\gN{{\mathcal{N}}}
\def\gO{{\mathcal{O}}}
\def\gP{{\mathcal{P}}}
\def\gQ{{\mathcal{Q}}}
\def\sT{{\mathbb{T}}}
\newtheorem{theorem}{Theorem}
\newtheorem{lemma}{Lemma}
\newtheorem{definition}{Definition}
\newcommand{\numberthis}{\addtocounter{equation}{1}\tag{\theequation}}
\def\R{\mathbb{R}}
\def\N{\mathbb{N}}
\def\Z{\mathbb{Z}}
\def\ll{L^2(\Omega)}
\def\lldf{L^2(\Omega;\R^{d_f})}
\def\lldu{L^2(\Omega;\R^{d_u})}
\def\lln{L^2_N(\Omega)}
\def\llndf{L^2_N(\Omega;\R^{d_f})}
\def\llndu{L^2_N(\Omega;\R^{d_u})}
\def\lldvdv{L^2(\R^{d_v};\R^{d_v})}
\def\h2{H^2(\Omega)}
\def\linf{L^\infty(\Omega)}
\def\h{H_0^1(\Omega)}
\def\project{\Pi_N}
\def\calG{\mathcal{G}}
\def\calF{\mathcal{F}}
\definecolor{prune}{rgb}{0.44, 0.11, 0.11}
\definecolor{myblue}{rgb}{0, .5, 1}
\definecolor{maroon}{rgb}{0.5450, 0, 0}
\definecolor{darkred}{rgb}{0.5450, 0, 0}
\definecolor{RoyalBlue}{RGB}{0,100,170}
\definecolor{DarkBlue}{RGB}{20,70,200}
\definecolor{peach}{rgb}{1, 0.56, 0.56}
\definecolor{NotionGreen}{RGB}{15,123,108}
\definecolor{NotionOrange}{RGB}{217,115,13}
\definecolor{NotionRed}{RGB}{224,62,62}
\definecolor{MontrealBlue}{RGB}{0, 30, 98}
\def\shownotes{1}  %
\newcommand{\authnote}[2]{{$\ll$\textsf{\footnotesize #1: #2}$\gg$}}
\newcommand{\authnote}[2]{}
\newif\ifcomments
  \newcommand{\colornote}[3]{{\color{#1}\bf{#2: #3}\normalfont}}
  \newcommand{\colornote}[3]{}
\crefname{equation}{Eq.}{Eq.}
\crefname{figure}{Figure}{Figure~}
\crefname{table}{Table}{Table~}
\crefname{section}{Sec.}{Sec.~}
\crefname{algorithm}{Algorithm}{Algorithm~}
\crefname{thm}{Theorem}{Theorem~}
\crefname{lemma}{Lemma}{Lemma~}
\crefname{appendix}{Appendix}{Appendix~}
\def\ie{\textit{i.e.,~}}
\title{Deep Equilibrium Based Neural Operators for Steady-State PDEs}
\author{
Tanya Marwah\footnote{\texttt{tmarwah@andrew.cmu.edu}, Carnegie Mellon University. (Equal Contribution)} 
\and
Ashwini Pokle\footnote{\texttt{apokle@andrew.cmu.edu}, Carnegie Mellon University. (Equal Contribution)} 
\and
J. Zico Kolter\footnote{\texttt{zicokolter@andrew.cmu.edu}, Carnegie Mellon University, Bosch Center for AI} 
\and 
Zachary C. Lipton\footnote{\texttt{zlipton@andrew.cmu.edu}, Carnegie Mellon University.}
\and Jianfeng Lu\footnote{\texttt{jianfeng@math.duke.edu}, Duke University.} 
\and Andrej Risteski\footnote{\texttt{aristesk@andrew.cmu.edu}, Carnegie Mellon University. 
}}
\author{
  Tanya Marwah$^1$\thanks{Equal contribution. Correspondence to \texttt{tmarwah@andrew.cmu.edu} and \texttt{apokle@andrew.cmu.edu}} \quad 
  Ashwini Pokle $^{1*}$ \quad
  J. Zico Kolter $^{1, 2}$ \quad 
  \textbf{Zachary C. Lipton} $^1$ \quad \\
  \textbf{Jianfeng Lu} $^3$ \quad 
  \textbf{Andrej Risteski} $^1$ \\
  $^1$Carnegie Mellon University \quad  $^2$ Bosch Center for AI \quad $^3$ Duke University \\
  \texttt{\{tmarwah,apokle,zicokolter,zlipton,aristesk\}@andrew.cmu.edu} \quad \\ \texttt{jianfeng@math.duke.edu}
  }
\begin{document}

\maketitle

\begin{abstract}
Data-driven machine learning approaches 
are being increasingly used to solve partial differential equations (PDEs). They have shown particularly striking successes when training an operator, which takes as input a PDE in some family, and outputs its solution.   
However, the architectural design space, especially given structural knowledge of the PDE family of interest, is still poorly understood.    
We seek to remedy this gap by studying the benefits of weight-tied neural network architectures for steady-state PDEs. 
To achieve this, we first demonstrate that the solution of most steady-state PDEs can be expressed as a fixed point of a non-linear operator. 
Motivated by this observation, we propose FNO-DEQ, a deep equilibrium variant of the FNO architecture that directly solves for the solution of a steady-state PDE as the infinite-depth fixed point of an implicit operator layer using a black-box root solver and differentiates analytically through this fixed point resulting in $\mathcal{O}(1)$ training memory.
Our experiments indicate that FNO-DEQ-based architectures outperform 
FNO-based baselines with $4\times$ the number of parameters in predicting the solution to steady-state PDEs such as Darcy Flow and steady-state incompressible Navier-Stokes.
Finally, we show FNO-DEQ 
is more robust 
when trained with datasets with more noisy observations
than the FNO-based baselines, 
demonstrating the benefits of using appropriate inductive biases in architectural design for different 
neural network based PDE solvers.
Further, we 
show a universal approximation result that demonstrates that FNO-DEQ can approximate the solution to any steady-state PDE that can be written as a fixed point equation.
\end{abstract}
\section{Introduction}
Partial differential equations (PDEs) 
are used to model a wide range of processes in science and engineering.
They define a relationship of (unknown) function and its partial derivatives. %
Most PDEs do not admit a closed form solution, and are solved using
a variety of classical numerical methods
such as finite element~\citep{leveque2007finite},
finite volume~\citep{moukalled2016finite}, and spectral methods~\citep{kopriva2009implementing, boyd2001chebyshev}. These methods are often very computationally expensive, both as the ambient dimension grows, and as the desired accuracy increases.

This has motivated a rapidly growing area of research in data-driven approaches to PDE solving. One promising approach involves  learning \emph{neural solution operators}~\citep{chen1995universal, lu2019deeponet, bhattacharya2021model, li2020neural}, which take in the coefficients of a PDE in some family and output its solution---and are trained by examples of coefficient-solution pairs.  

While several architectures for this task have been proposed, the design space---in particular taking into account structural properties of the PDEs the operator is trained on---is still largely unexplored. Most present architectures are based on ``neuralizing'' a classical numerical method. For instance, \cite{li2020fourier} take inspiration from spectral methods, and introduce FNO: a trained composition of (parametrized) kernels in Fourier space. \cite{brandstetter2022message} instead consider 
finite-difference methods and generalize them into (learnable) graph neural networks using message-passing.  

Our work focuses on families of PDEs that describe the steady-state of a system (that is, there is no time variable). Namely, we consider equations of the form: %

\begin{equation}
    \label{eq:lu_f}
    L(a(x), u(x)) = f(x), \qquad \forall x \in \Omega,
\end{equation}
where $u:\Omega \to \R^{d_u}$, 
$a: \Omega \to \R^{d_a}$ 
and $f: \Omega \to \R^{d_f}$ are functions defined over the domain $\Omega$, 
and $L$ is a (possibly non-linear) operator. 
This family includes many natural PDE families like Poisson equations, electrostatic equations, and steady-state Navier-Stokes.

We take inspiration from classical numerical approaches of fast-converging Newton-like iterative schemes 
\citep{leveque2007finite, farago2002numerical}
to solve steady-state PDEs, as well as recent theoretical works for elliptic (linear and non-linear PDEs)~\citep{marwah2021parametric, chen2021representation, marwah2022neural} to hypothesize that very deep, but heavily weight-tied architectures would provide a useful 
architectural design choice for steady-state PDEs.

In this paper, we show that for steady state equations
it is often more beneficial to weight-tie
an existing neural operator, 
as opposed to making the model deeper---thus increasing its size.
To this end, we introduce {\bf FNO-DEQ}, a new architecture for solving steady-state PDEs. FNO-DEQ 
is a deep equilibrium model (DEQ)
that utilizes weight-tied FNO layers along with 
implicit differentiation and root-solvers to approximate the solution of a steady-state PDE.
DEQs are a perfect match to the desiderata laid out above: they can be viewed alternately 
as directly parameterizing the fixed points 
of some iterative process; or
by explicitly expanding some iterative fixed point solver like 
Newton's or Broyden's method as an infinitely deep, weight-tied model.

Such an architecture has a distinct computational advantage: %
implicit layer models effectively
backpropagate through the infinite-depth network while using only constant memory (equivalent to a single layer’s activations).
Empirically, we show that for steady-state PDEs, 
weight-tied and DEQ based models perform better than baselines with 4$\times$ 
the number of parameters, and are robust to training data noise. 
In summary, we make the following contributions:
\begin{itemize}%
    \item We show the benefits of weight-tying as an effective architectural choice for neural operators when applied to steady-state PDEs.
    \item We introduce FNO-DEQ, a FNO based deep equilibrium model (DEQ) that uses implicit layers and 
        root solving to approximate the solution of a steady-state PDE. We further attest to the empirical performance 
        of FNO-DEQ by showing that it performs as well as FNO and its variants with $4\times$ number of parameters.
    \item  We show that FNO-DEQ and weight tied architectures 
    are more robust to both input and observation noise, 
    thus showing that weight-tying is a useful inductive bias for 
    architectural design for steady-state PDEs.
    \item By leveraging the universal approximation results of FNO~\citep{kovachki2021universal} we show that 
        FNO-DEQ based architectures can universally approximate the solution operator for a wide variety of steady-state PDE families. 
    \item Finally, we create a dataset of pairs of steady-state incompressible Navier-Stokes equations with different forcing functions and viscosities, along with their solutions, which we will make public as a community benchmark for steady-state PDE solvers. 
\end{itemize}

\section{Related Work}
\label{sec:related_work}

Neural network based approaches for solving PDEs 
can broadly be divided into two categories. 
First are hybrid solvers~\citep{bar2019learning, kochkov2021machine, hsieh2019learning}
which use neural networks in conjunction with existing numerical solvers. The main motivation is to 
not only improve upon the existing solvers, but to 
also replace the more computationally inefficient parts of the solver
with a learned counter part.
Second set of approaches are 
full machine learning based approaches
that aim 
to leverage the 
approximation capabilities of neural networks~\citep{hornik1989multilayer}
to directly learn the dynamics of the physical system from observations.

Hybrid solvers like \citet{hsieh2019learning} use a neural network to learn a correction term 
to correct over an existing hand designed solver for a Poisson equation, and
also provide convergence guarantees of their method to the solution of the PDE.  
However, the experiments in their paper are limited to linear elliptic PDEs.
Further, solvers like~\citet{bar2019learning} use neural networks to derive the discretizations for a given PDE, thus enabling 
the use of a low-resolution grid in the numerical solver. 
Furthermore,~\citet{kochkov2021machine} use neural networks to interpolate differential operators between grid points
of a low-resolution grid with high accuracy. 
This work specifically focuses on solving Navier-Stokes equations, 
their method is more accurate than numerical techniques like Direct Numerical Simulation (DNS) 
with a low-resolution grid, and is also $80\times$ more faster.
~\citet{brandstetter2022message} introduced a message passing based hybrid scheme 
to train a hybrid solver and also propose a loss term which helps improve the stability of hybrid solvers for time dependent PDEs.
However, most of these methods are equation specific, and are not easily
transferable to other PDEs 
from the same family.

The neural network based approach that has recently 
garnered the most interest by the community is that of 
the operator learning framework
~\citep{chen1995universal,kovachki2021neural,lu2019deeponet,li2020fourier,bhattacharya2021model},
which uses a neural network to approximate 
and infinite dimensional operator between two Banach spaces, 
thus learning an entire family of PDEs at once.
~\citet{lu2019deeponet} introduces DeepONet, which uses two 
deep neural networks, referred to as the branch net and trunk net, which are 
trained concurrently to learn from data.
Another line of operator learning framework is that of neural operators~\cite{kovachki2021neural}.
The most successful methodology for neural operators being the
Fourier neural operators (FNO)
~\citep{li2020fourier}.
FNO 
uses convolution based integral kernels which are evaluated in the
Fourier space. 
Future works like~\cite{tran2021factorized}
introduce architectural improvements that enables one to train 
deeper FNO networks, thus increasing their size and improving their 
the performance on a variety of (time-dependent) PDEs. 
Moreover, the success of Transformers in domains like language and vision
has also inspired transformer based neural operators in works like
\citet{li2022transformer, hao2023gnot} and \citet{Liu2022-nd}.
Theoretical results pertaining to the neural operators
mostly include universal approximation results~\cite{kovachki2021universal,lanthaler2022error} which show
that architectures like FNO and DeepONet can indeed approximate the infinite dimension operators.

In this work, we focus on steady-state equations
and show the benefits of weight-tying in improving the performance 
of FNO for steady-state equations. 
We show that instead of making a network deeper and hence increasing the size
of a network, 
weight-tied FNO architectures can outperform FNO and its variants $4\times$
its size. 
We further introduce FNO-DEQ, a deep equilibrium 
based architecture to simulate an infinitely deep weight-tied 
network (by solving for a fixed point)
with $\mathcal{O}(1)$ training memory.
Our work takes inspiration from 
recent 
theoretical works like~\cite{marwah2021parametric, chen2021representation, marwah2022neural}
which derive parametric rates for some-steady state equations, and 
in fact prove that neural networks can approximate solutions to 
some families of PDEs with just $\mbox{poly}(d)$ parameters, 
thus evading the curse of dimensionality.

\section{Preliminaries}
\label{sec:background}
We now introduce some key concepts and notation. 
\begin{definition}[$L^2(\Omega; \R^d)$]
    For a domain $\Omega$ we denote by $L^2(\Omega; \R^d)$ 
    the space of square integrable functions $g: \Omega \to \R^d$
    such that 
    $\|g\|_{\ll} < \infty$, where
    $\|g\|_{\ll} = \left(\int_\Omega \|g(x)\|_{\ell_2}^2 dx\right)^{1/2}$.
\end{definition}

\subsection{Neural Operators}
Neural operators \citep{lu2019deeponet, li2020fourier, bhattacharya2021model, patel2021physics, kovachki2023neural} 
are a deep learning approach to learning solution operators which map a PDE to its solution. 
Fourier Neural Operator (FNO) \citep{li2020fourier} is a particularly successful recent architecture 
parametrized as a sequence of kernel integral operator layers followed by non-linear activation functions. Each kernel integral operator layer is a convolution-based kernel function that is instantiated through a linear transformation in Fourier domain, making it less sensitive to the level of spatial discretization. Specifically, an $L$-layered FNO 
$G_\theta: \R^{d_u} \to \R^{d_u}$ 
with learnable parameters $\theta$, is defined as 
\begin{equation}
    \label{eq:fno_layer_def}
    G_\theta := \gQ \circ \mathcal{L}_L \circ \mathcal{L}_{L-1} \circ \cdots \circ \mathcal{L}_1 \circ \gP
\end{equation}
where 
$\mathcal{P}: \lldu \to \lldvdv$ 
and 
$\mathcal{Q}: L^2(\R^{d_v};\R^{d_v}) \to  
L^{2}(\R^{d_v};\R^{d_u}) $ 
are projection operators, 
and 
$\mathcal{L}_{l}: L^2(\R^{d_v};\R^{d_v})\to L^2(\R^{d_v};\R^{d_v})$
for $l \in [L]$ is the $l^{\text{th}}$ FNO layer defined as, 
\begin{equation}
    \label{eq:fno_layer}
    \mathcal{L}_{l}\left(v_{l}\right) 
    = \sigma \left(W_{l} v_{l} + b_{l} + \mathcal{K}_{l}(v_l))\right).
\end{equation}
Here $\sigma$ is a non-linear activation function, $W_l, b_l$ are the $l^{th}$ layer weight matrix and bias terms. Finally
$\mathcal{K}_{l}$ 
is the $l^{th}$ integral kernel operator 
which is calculated using the Fourier transform as introduced in ~\citet{li2020fourier}
defined as follows,
\begin{equation}
    \label{eq:kernal_integral_operator}
    \mathcal{K}_l(v_l) = \mathcal{F}^{-1} \left(R_l \cdot \left(\mathcal{F} v_l\right)\right)(x) \qquad \forall x \in \Omega, 
\end{equation}
where $\calF$ and $\calF^{-1}$ are the 
Fourier transform and the inverse 
Fourier transform, with $R_l$
representing the learnable weight-matrix in the Fourier domain. 
Therefore, ultimately, the trainable parameters $\theta$ is a collection of all the weight matrices and biases, i.e, 
$\theta := \{W_l, b_l, R_l, \cdots, W_1, b_1, R_1\}$.

\subsection{Equilibrium Models}
Equilibrium models \citep{liao2018reviving, bai2019deep, revay2020lipschitz, winston2020monotone} compute internal representations by solving for a fixed point in their forward pass. 
Specifically, consider a deep feedforward network with $L$ layers :
\begin{equation}
    z^{[i+1]} = f_\theta^{[i]}\left(z^{[i]}; x \right) \quad \text{for} \; i = 0, ..., L-1
\end{equation}
where $x \in \mathbb{R}^{n_x}$ is the input injection, $z^{[i]} \in \mathbb{R}^{n_z}$ is the hidden state of $i^{th}$ layer with $z^{[0]} = \mathbf{0}$, and $f_\theta^{[i]} : \mathbb{R}^{n_x \times n_z} \mapsto \mathbb{R}^{n_z}$ is the feature transformation of $i^{th}$ layer, parametrized by $\theta$. 
Suppose the above model is weight-tied, \ie  $f_\theta^{[i]} = f_\theta, \forall i$, and $\lim_{i \rightarrow \infty} f_\theta \left( z^{[i]}; x \right)$ exists and its value is $z^\star$. Further, assume that for this $z^\star$, it holds that $f_\theta\left(z^\star; x \right) = z^\star$.  Then, equilibrium models can be interpreted as the infinite-depth limit of the above network such that
$f^\infty_\theta\left(z^\star; x \right) = z^\star $

Under certain conditions\footnote{The fixed point can be reached if the  dynamical system is globally contractive. This is usually not true in practice for most choices of $f_\theta$, and divergence is possible.}, and for certain classes of $f_\theta$\footnote{\citet{bai2019deep} state that $f_\theta$ needs to be stable and constrained. In general, by Banach's fixed point theorem, global convergence is guaranteed if $f_\theta$ is contractive over its input domain.}, the output $z^\star$ of the above weight-tied network is a fixed point. 
A simple way to solve for this fixed point is to use fixed point iterations, \ie repeatedly apply the update $z^{[t+1]} = f_\theta(z^{[t]}; x)$ some fixed number of times, and backpropagate through the network to compute gradients. However, this can be computationally expensive.
Deep equilibrium (DEQ) models \citep{bai2019deep} 
explicitly solve 
for $z^\star$
through iterative root finding methods like Broyden's method~\citep{broyden1965class}, Newton's method, Anderson acceleration~\citep{anderson1965iterative}. DEQs use implicit function theorem to directly differentiate through the fixed point $z^\star$ at equilibrium, thus requiring constant memory to backpropagate through an infinite-depth network:
\begin{equation}
    \dfrac{\partial z^\star}{\partial \theta} =  \left( I - \dfrac{\partial f_\theta(z^\star; x)}{\partial z^\star}\right)^{-1} \dfrac{\partial f_\theta (z^\star; x)}{\partial \theta} \label{eq:implcit-grad-deq}
\end{equation}
Computing the inverse of Jacobian can quickly become intractable as we deal with high-dimensional feature maps. One can replace the inverse-Jacobian term with an identity matrix \ie Jacobian-free \citep{fung2022jfb} or an approximate inverse-Jacobian \citep{geng2021training} without affecting the final performance. There are alternate formulations of DEQs~\citep{winston2020monotone} that guarantee existence of a unique equilibrium point. However, designing $f_\theta$ for these formulations can be challenging, and in this work we use the formulation by \citet{bai2019deep}.

\section{Problem setting}

We first formally define the system of steady-state PDEs that we will solve for:
\begin{definition}[Steady-State PDE] \label{def:steady_state_PDE}
    Given a bounded open set $\Omega \subset \R^d$, 
    a steady-state PDE 
    can be written in the following general form:
    \begin{equation}
        \begin{split}
            L(a(x), u(x)) &= f(x), \qquad \forall x \in \Omega \\
        \end{split}
    \end{equation}
    Here $L$ is a continuous operator, the function
    $u \in L^2\left(\Omega; \R^{d_u}\right)$
    is the unknown function that we wish to solve for
    and 
    $a \in L^2\left(\Omega; \R^{d_a}\right)$
    collects all the coefficients describing the PDE, and 
    $f \in L^2\left(\Omega; \R^{d_f}\right)$
    is a function independent of $u$.
    We will, for concreteness, assume periodic boundary conditions, i.e. $\forall z \in \Z^d, \forall x \in \Omega$ we have $u(x + z) = u(x)$. (Equivalently,  
    $\Omega:= \sT^d = [0, 2\pi]^d$ can be identified with the torus.)
    \footnote{
   This is for convenience of exposition, our methods can readily be extended to other boundary conditions 
    like Dirichet, Neumann etc. 
    }
    Finally, we will denote $u^\star: \Omega \to \R$ as the solution 
    to the PDE.
\end{definition}
Steady-state models a system at stationarity, \ie when some quantity of interest like temperature or velocity no longer changes over time. Classical numerical solvers for these PDEs include iterative methods like Newton updates
or conjugate gradient descent, typically with carefully chosen preconditioning to ensure benign conditioning and fast convergence.   
Furthermore, recent theoretical works \citep{marwah2021parametric,chen2021representation,marwah2022neural} 
show that for many families of PDEs 
(e.g., steady-state elliptic PDEs that admit a variational formulation), 
iterative algorithms can be efficiently ``neuralized'', 
that is, the iterative algorithm can be represented by a compact neural network, 
so long as the coefficients of the PDE are also representable by a compact neural network.
Moreover, the architectures constructed in these works are heavily weight-tied.

We will operationalize these developments through the
additional observation that all these iterative schemes can be viewed as algorithms to find a fixed point of a suitably chosen operator. Namely, we can design an operator
$\gG: L^2(\Omega; \R^{d_u}) \times L^2(\Omega; \R^{d_f}) \to L^2(\Omega; \R^{d_u})$ 
\footnote{We note that the choice of defining the operator
with the forcing function $f$ is made for purely expository purposes
the operator $\gG$ can be defined as 
$\gG: \lldu \times L^2(\Omega; \R^{d_a}) \to \lldu$ as well.}
such that $u^\star = \gG(u^\star, f)$ and a lot of common (preconditioned) first and second-order methods are natural ways to recover the fixed points $u^\star$. 

Before describing our architectures, we introduce two components that we will repeatedly use. 

\begin{definition}[Projection and embedding layers] 
    A projection and embedding layer, respectively 
    $\gP: \lldu \times \lldf \to \lldvdv \times \lldvdv$
    and 
    $\gQ: \lldvdv \to L^2(\R^{d_v}; \R^{d_u})$, 
    are defined as
    \begin{align}
        \gP(v, f) 
        &= \left(\sigma\left(W_P^{(1)}v + b_{P}^{(1)} \right),
            \sigma\left(W_P^{(2)}f + b_P^{(2)}\right)\right)\nonumber, 
            \\
        \gQ(v) 
        &= \sigma\left(W_Qv + b_Q \right) \nonumber
    \end{align}
    where 
    $W_P^{(1)} \in \R^{d_u \times d_v}, W_P^{(2)} \in \R^{d_f \times d_v}, W_Q \in \R^{d_v\times d_u}$ 
    and 
    $b_P^{(1)},b_P^{(2)} \in \R^{d_v}, b_Q \in \R^{d_u}$.
\label{d:projection}
\end{definition}

\begin{definition}[Input-injected FNO layer]
\label{d:inputfno}
An input-injected FNO layer 
    $\gL: \lldvdv \times \lldvdv \to \lldvdv$ is defined as 
    \begin{equation}
        \label{eq:fno_layer_updated}
       \gL(v, g) := g + \sigma
            \left(W v + b + \gF^{-1}(R^{(k)} \cdot (\gF v)\right).
    \end{equation}
    where $W \in \R^{d_v \times d_v}$, $b \in \R^{d_v}$
    and $R^{(k)} \in \R^{d_v \times d_v}$ for all $k \in [K]$
    are learnable parameters. 
\end{definition}
Note the difference between the FNO layer specified above, and the standard FNO layer \eqref{eq:fno_layer} is the extra input $g$ to the layer, 
which in our architecture will correspond to a projection of (some or all) of the PDE coefficients.
We also note that this change to the FNO layer also enables us to learn deeper FNO architectures, as
shown in Section~\ref{sec:experiments}.
With this in mind, we can discuss the architectures we propose.

\paragraph{Weight-tied architecture I: Weight-tied FNO}
The first architecture we consider is a weight-tied version of FNO (introduced in Section~\ref{sec:background}), in which we repeatedly apply ($M$ times) the same transformation in each layer. More precisely, we have:

\begin{definition}[FNO Weight-Tied]
    \label{def:fno_weight_tied}
    We define a $M$ times weight-tied neural operator 
    $G^M_\theta$ as,
    \begin{align*}
        G^M_\theta = \gQ \circ 
        \underbrace{\gB^L \circ \gB^L \circ \cdots \circ \gB^L}_{\text{M times}} \circ \gP
    \end{align*}
    such that: $\gP, \gQ$ are projection and embedding layers as in Definition~\ref{d:projection} 
        an $\gB^L: \lldvdv \times \lldvdv \to \lldvdv$
    is an $L$-layer FNO block, i.e, $
        \gB^L = \gL_{L} \circ \gL_{L-1} \circ \gL_{L-2} \circ \gL_{1}$ 
        where for all $l \in [L]$, $\gL_l(\cdot, \gP(f))$
        \footnote{We are abusing the notation somewhat and denoting by $\gP(f)$ the second coordinate of $\gP$, which only depends on $f$.} is an input-injected FNO block as in Definition~\ref{d:inputfno}. 
    
\end{definition}

\paragraph{Weight-tied architecture II: FNO-DEQ} %
In cases where we believe a weight-tied $G_\theta^M$ converges to some fixed point as $M \to \infty$, 
unrolling $G_\theta^M$ 
for a large $M$ requires a lot of hardware memory for training:  training the model requires one to store intermediate hidden units 
for each weight-tied layer for backpropagation, resulting in a $\gO(M)$
increase in the amount of memory required. 

To this end, we use Deep Equilibrium models (DEQs) which enables 
us to implicitly train $G_\theta := \lim_{M\to\infty} G_\theta^{M}$ by directly 
solving for the fixed point by leveraging black-box root finding algorithms
like quasi-Newton methods,
~\citep{broyden1965class,anderson1965iterative}. Therefore
we can think of this approach as an infinite-depth (or infinitely unrolled) weight-tied network. 
We refer to this architecture as \textbf{FNO-DEQ}. 
\begin{definition}[FNO-DEQ]
    \label{def:fno_deq}
    Given $\gP, \gQ$ and $\gB^{L}$ in 
    Definition~\ref{def:fno_weight_tied}, 
    
    FNO-DEQ is trained to parametrize the fixed point equation 
    $\gB^{L}\left(v^\star, \gP(f)\right) = v^\star$ and outputs $u^\star = \gQ(v^\star)$.
\end{definition}

Usually, it is non-trivial to differentiate through these black-box root solvers. DEQs enable us to implicitly differentiate through the equilibrium 
fixed point efficiently without any need to backpropagate through these root solvers, therefore resulting in $\gO(1)$ training memory.

\section{Experiments}
\label{sec:experiments}

\textbf{Network architectures.} We consider the following network architectures in our experiments.

\textbf{FNO}: We closely follow the architecture proposed by \citet{li2020fourier} and construct this network by stacking four FNO layers and four convolutional layers, separated by GELU activation \citep{gelu}. Note that in our current set up, we recover the original FNO architecture if the input to the $l^{\text{th}}$ layer is the output of $(l-1)^{\text{th}}$ layer \ie 
$v_l = \gB_{l-1}(v_{l-1})$.

\textbf{Improved FNO (FNO++
)}: 
The original FNO architecture suffers from vanishing gradients, which prohibits it from being made deeper~\citep{tran2021factorized}. We overcome this limitation by introducing residual connections within each block of FNO, with each FNO block $\gB_l$ comprising of three FNO layers $\gL$ \ie $\gB_l = \gL^l_{L_1} \circ \gL^l_{L_2} \circ \gL^l_{L_3}$ and three convolutional layers, where $\gL$ is defined in \cref{eq:fno_layer_updated}.

\textbf{Weight-tied network (FNO-WT)}: 
This is the weight-tied architecture introduced in Definition~\ref{def:fno_weight_tied},
where we initialize $v_0(x) = 0$ for all $x \in \Omega$.

\textbf{FNO-DEQ}: As introduced in Definition~\ref{def:fno_deq}, FNO-DEQ is a
weight-tied network where we explicitly solve for the fixed point in the forward pass with a root finding algorithm. 
We use Anderson acceleration \citep{anderson1965iterative} as the root solver. 
For the backward pass, we use approximate implicit gradients \citep{geng2021training} which are light-weight and more stable in practice, compared to implicit gradients computed by inverting Jacobian. 

Note that both weight-tied networks and FNO-DEQs leverage weight-tying but the two models differ in the ultimate goal of the forward pass: DEQs explicitly solve for the fixed point during the forward pass, while weight-tied networks trained with backpropagation may or may-not reach a fixed point \citep{anil2022path}. 
Furthermore, DEQs require $\mathcal{O}(1)$ memory, as they differentiate through the fixed point implicitly, whereas weight-tied networks need to explicitly create the entire computation graph for backpropagation, which can become very large as the network depth increases.
Additionally, FNO++ serves as a non weight-tied counterpart to a weight-tied input-injected network. Like weight-tied networks, FNO++ does not aim to solve for a fixed point in the forward pass.

\textbf{Experimental setup.} We test the aforementioned network architectures on two families of steady-state PDEs: Darcy Flow equation and steady-state Navier-Stokes equation for incompressible fluids. 
For experiments with Darcy Flow, we use the dataset provided by \cite{li2020fourier}, 
and generate our own dataset for steady-state Navier-Stokes.
For more details on the datasets and the data generation processes we refer to 
Sections~\ref{subsec:Darcy_flow_implementation} 
and ~\ref{subsec:navier_stokes_implementation} of the Appendix.
For each family of PDE, we train networks under 3 different training setups: clean data, noisy inputs and noisy observations. For experiments with noisy data, both input and observations, we add noise sampled from a sequence of standard Gaussians with increasing values of variance $\{ \gN(0, (\sigma_k^2))\}_{k=0}^{M-1}$, 
where $M$ is the total number of Gaussians we sample from. 
We set $\sigma^2_0 = 0$ and $\sigma^2_{\text{max}} = \sigma^2_{M-1} \leq 1/r$, where $r$ is the resolution of the grid. Thus, the training data includes equal number of PDEs with different levels of Gaussian noise added to their input or observations.
We add noise to training data, and always test on clean data. We follow prior work \citep{li2020neural} and report relative $L_2$ norm between ground truth $u^\star$ 
and prediction on test data.
The total depth of all networks besides FNO is given by $6B + 4$, where $B$ is the number of FNO blocks. Each FNO block has 3 FNO layers and convolutional layers. 
In addition, we include the depth due to $\mathcal{P}$, $\mathcal{Q}$, and an additional final FNO layer and a convolutional layer. 
We further elaborate upon network architectures and loss functions in 
in \cref{sec:implementation_details}.

\vspace{-2mm}
\subsection{Darcy Flow}
\label{subsec:darcy_flow}
For our first set of experiments we consider stationary Darcy Flow equations, a form of linear elliptic PDE with 
in two dimensions. 
The PDE is defined as follows,
\begin{equation}
    \begin{split}
        -\nabla \cdot (a(x)\nabla u(x)) &= f(x), \qquad x \in (0, 1)^2 \\
        u(x) &= 0 \qquad\qquad x \in \partial(0, 1)^2.
    \end{split}
\end{equation}
Note that the diffusion coefficient
$a \in \linf(\Omega; \R_+)$, 
i.e., the coefficients are always positive, 
and 
$f\in
L^2(\Omega; \R^{d_f})$
is the forcing term.
These PDEs are used to model the 
steady-state pressure of fluids flowing through a porous media. 
They can also be used to model the stationary state of the diffusive process with $u(x)$ modeling 
the temperature distribution through the space with $a$ defining the thermal conductivity of the medium.
The task is to learn an operator 
$G_\theta: L^2(\Omega; \R^{d_u}) \times L^2(\Omega; \R^{d_a}) \to L^2(\Omega; \R^{d_u})$
such that $u^\star = G_\theta(u^\star, a)$.

We report the results of our experiments on Darcy Flow in \cref{table:results-darcy-flow-all}. 
The original FNO architecture does not improve its performance  with increased number of FNO blocks $\gB$.
FNO++ with residual connections scales better but saturates at around 4 FNO blocks. In contrast, FNO-WT and FNO-DEQ with just a \emph{single} FNO block outperform deeper non-weight-tied architectures on clean data and on data with noisy inputs. 
When  observations are noisy, FNO-WT and FNO-DEQ outperform FNO++ with a similar number of parameters, and perform comparably to FNO++ with $4\times$ parameters. 

We also report results on shallow FNO-DEQ, FNO-WT and FNO++ architectures. An FNO block in these shallow networks has a single FNO layer instead of three layers. In our experiments, shallow weight-tied networks outperform non-weight-tied architectures including FNO++ with $7\times$ parameters on clean data and on data with noisy inputs, and  perform comparably to deep FNO++ on noisy observations. In case of noisy observations, we encounter training instability issues in FNO-DEQ. We believe that this shallow network lacks sufficient representation power and cannot accurately solve for the fixed point during the forward pass. These errors in fixed point estimation accumulate over time, leading to incorrect values of implicit gradients, which in turn result in training instability issues.
\begin{table*}[th!]
\centering
\resizebox{0.95\textwidth}{!}{%
\begin{tabular}{cccccc}
\toprule
\multirow{3}{*}{Architecture} & \multirow{3}{*}{Parameters} & \multirow{3}{*}{\#Blocks} & \multicolumn{3}{c}{Test error $\downarrow$} \\
\cmidrule(lr){4-6}
& & & $\sigma^2_{\max}=0$ & $(\sigma^2_{\max})^i=0.001$ & $(\sigma^2_{\max})^t=0.001$ \\
\midrule
FNO & 2.37M & 1 & 0.0080 $\pm$ 5e-4 & 0.0079 $\pm$ 2e-4  &  0.0125 $\pm$ 4e-5 \\
FNO & 4.15M & 2 & 0.0105 $\pm$ 6e-4 & 0.0106 $\pm$ 4e-4 & 0.0136 $\pm$ 2e-5 \\
FNO & 7.71M & 4 & 0.2550 $\pm$ 2e-8 & 0.2557 $\pm$ 8e-9 & 0.2617 $\pm$ 2e-9 \\
\midrule
FNO++ & 2.37M & 1 & 0.0075 $\pm$ 2e-4 & 0.0075 $\pm$ 2e-4 &  0.0145 $\pm$ 7e-4 \\
FNO++ & 4.15M & 2 & 0.0065 $\pm$ 2e-4 & 0.0065 $\pm$ 9e-5 & 0.0117 $\pm$ 5e-5 \\
FNO++ & 7.71M & 4 & 0.0064 $\pm$ 2e-4 & 0.0064 $\pm$ 2e-4 & \textbf{0.0109 $\pm$ 5e-4}  \\
S-FNO++ & 1.78M & 0.66 & 0.0093 $\pm$ 5e-4 & 0.0094 $\pm$ 7e-4  & 0.0402 $\pm$ 6e-3 \\
\midrule
FNO-WT & 2.37M & 1 & \textbf{0.0055 $\pm$ 1e-4} & \textbf{0.0056 $\pm$ 5e-5} & 0.0112 $\pm$ 4e-4 \\
FNO-DEQ & 2.37M &  1 & \textbf{0.0055 $\pm$ 1e-4} & \textbf{0.0056 $\pm$ 7e-5} & 0.0112 $\pm$ 4e-4 \\
\midrule
S-FNO-WT 
& 1.19M & 0.33 & 0.0057 $\pm$ 3e-5 & 0.0057 $\pm$ 5e-5 & 0.0112 $\pm$ 1e-4 \\
S-FNO-DEQ 
& 1.19M & 0.33 & 0.0056 $\pm$ 4e-5 & 0.0056 $\pm$ 5e-5 & 0.0136 $\pm$ 0.011 \\
\bottomrule
\end{tabular}}
\caption{Results on Darcy flow: clean data (Col 4),noisy inputs (Col 5) and noisy observations (Col 6) with max variance of added noise being $(\sigma^2_{\max})^i$ and $(\sigma^2_{\max})^t$, respectively. Reported test error has been averaged on three different runs with seeds 0, 1, and 2.
Here, S-FNO++, S-FNO-WT and S-FNO-DEQ are shallow versions 
of FNO++, FNO-WT and FNO-DEQ respectively.
}
\label{table:results-darcy-flow-all}
\end{table*}
\vspace{-2mm}

\subsection{Steady-state Navier-Stokes Equations for Incompressible Flow}
\label{subsec:navier_stokes}
We consider the steady-state Navier-Stokes equation for an incompressible viscous fluid in the vorticity form
defined on a torus, i.e., with periodic boundary condition,
\begin{equation}
    \begin{split}
        \label{eq:navier_stokes}
        u \cdot \nabla \omega &= \nu \Delta \omega + f, \qquad x \in \Omega\\
        \nabla \cdot u &= 0 \qquad\qquad\quad\;\;  x \in \Omega
    \end{split}
\end{equation}
where $\Omega:=(0, 2\pi)^2$, and $u:\Omega \to \R^2$ is the velocity and 
$\omega:\Omega \to \R$ where $\omega = \nabla \times u$, $\nu \in \R_+$ is the viscosity 
and $f: \Omega \to \R$ is the external force term.
We learn an operator $G_\theta: L^2(\Omega; \R^{d_u}) \times L^2(\Omega; \R^{d_f}) \to L^2(\Omega; \R^{d_u})$, such that 
$u^\star = G_\theta(u^\star, f)$.
We train all the models on data with viscosity values $\nu = 0.01$ and $\nu=0.001$, and create a dataset
for steady-state incompressible
Navier-Stokes, which we will make public as a community benchmark
for steady-state PDE solvers.

Results for Navier-Stokes equation have been reported in \cref{table:results-navier-stokes-visc-0.001-nl-0.001} and \cref{table:results-navier-stokes-visc-0.01-nl-0.001}. For both values of viscosity, FNO-DEQ outperforms other architectures for all three cases: clean data, noisy inputs and noisy observations. 
FNO-DEQ is more robust to noisy inputs compared to non-weight-tied architectures. 
For noisy inputs, FNO-DEQ matches the test-error of noiseless case in case of viscosity $0.01$ and almost matches the test-error of noiseless case in case of viscosity $0.001$. 
We provide additional results for noise level $0.004$ in Appendix~\ref{sec:additional-experiments}. %
FNO-DEQ and FNO-WT consistently outperform non-weight-tied architectures for higher levels of noise as well.

In general, DEQ-based architectures are slower to train  (upto $\sim$2.5$\times$ compared to feedforward networks of similar size) as we solve for the fixed point in the forward pass. 
However, their inductive bias provides performance gains that cannot be achieved by simply stacking non-weight-tied FNO layers. In general, we observe diminishing returns in FNO++ beyond 4 blocks. Additionally, training the original FNO network on more than 4 FNO blocks is challenging, with the network often diverging during training, and therefore we do not include these results in the paper.

\begin{table*}[th!]
\centering
\resizebox{0.95\textwidth}{!}{%
\begin{tabular}{cccccc}
\toprule
\multirow{3}{*}{Architecture} & \multirow{3}{*}{Parameters} & \multirow{3}{*}{\#Blocks} & \multicolumn{3}{c}{Test error $\downarrow$} \\
\cmidrule(lr){4-6}
& & & $\sigma^2_{\max}=0$ & $(\sigma^2_{\max})^i=0.001$ & $(\sigma^2_{\max})^t=0.001$ \\
\midrule
FNO & 2.37M & 1 & 0.184 $\pm$ 0.002 & 0.218 $\pm$ 0.003 & 0.184 $\pm$ 0.001 \\
FNO & 4.15M & 2 & 0.162 $\pm$ 0.024 & 0.176 $\pm$ 0.004 & 0.152 $\pm$ 0.005 \\
FNO & 7.71M & 4 & 0.157 $\pm$ 0.012 & 0.187 $\pm$ 0.004 & 0.166 $\pm$ 0.013 \\
\midrule
FNO++ & 2.37M & 1 & 0.199 $\pm$ 0.001 & 0.230 $\pm$ 0.001 & 0.197 $\pm$ 0.001\\
FNO++ & 4.15M & 2 & 0.154 $\pm$ 0.005 & 0.173 $\pm$ 0.003 & 0.154 $\pm$ 0.006\\
FNO++ & 7.71M & 4 & 0.151 $\pm$ 0.003 & 0.165 $\pm$ 0.004 & 0.149 $\pm$ 0.003\\
\midrule
FNO-WT & 2.37M & 1 & 0.151 $\pm$ 0.007 & 0.173 $\pm$ 0.017 & \textbf{0.126 $\pm$ 0.027}\\
FNO-DEQ & 2.37M &  1 & \textbf{0.128 $\pm$ 0.004} & \textbf{0.144 $\pm$ 0.007} & 0.136 $\pm$ 0.003 \\
\bottomrule
\end{tabular}}
\caption{Results on incompressible steady-state Navier-Stokes (viscosity=0.001): clean data (Col 4), noisy inputs (Col 5) and noisy observations (Col 6) with max variance of added noise being $(\sigma^2_{\max})^i$ and $(\sigma^2_{\max})^t$, respectively. Reported test error has been averaged on three different runs with seeds 0, 1, and 2. 
} 

\label{table:results-navier-stokes-visc-0.001-nl-0.001}
\end{table*}
\vspace{-4mm}

\begin{table*}[th!]
\centering
\resizebox{0.95\textwidth}{!}{%
\begin{tabular}{cccccc}
\toprule
\multirow{3}{*}{Architecture} & \multirow{3}{*}{Parameters} & \multirow{3}{*}{\#Blocks} & \multicolumn{3}{c}{Test error $\downarrow$} \\
\cmidrule(lr){4-6}
& & & $\sigma^2_{\max}=0$ & $(\sigma^2_{\max})^i=0.001$ & $(\sigma^2_{\max})^t=0.001$ \\
\midrule
FNO & 2.37M & 1 & 0.181 $\pm$ 0.005 & 0.186 $\pm$ 0.003 & 0.178 $\pm$ 0.006 \\
FNO & 4.15M & 2 & 0.138 $\pm$ 0.007 & 0.150 $\pm$ 0.006 & 0.137 $\pm$ 0.012\\
FNO & 7.71M & 4 & 0.152 $\pm$ 0.006 & 0.163 $\pm$ 0.002 & 0.151 $\pm$ 0.008\\
\midrule
FNO++ & 2.37M & 1 & 0.188 $\pm$ 0.002 & 0.207 $\pm$ 0.004& 0.187 $\pm$ 0.003 \\
FNO++ & 4.15M & 2 & 0.139 $\pm$ 0.004 & 0.153 $\pm$ 0.002 & 0.140 $\pm$ 0.005 \\
FNO++ & 7.71M & 4 & 0.130 $\pm$ 0.005 & 0.151 $\pm$ 0.004 & 0.128 $\pm$ 0.009 \\
\midrule
FNO-WT & 2.37M & 1 & 0.099 $\pm$ 0.007 & 0.101 $\pm$ 0.007 & 0.130 $\pm$ 0.044 \\
FNO-DEQ & 2.37M & 1 & \textbf{0.088 $\pm$ 0.006} & \textbf{0.099 $\pm$ 0.007} & \textbf{0.116 $\pm$ 0.011} \\
\bottomrule
\end{tabular}}

\caption{Results on incompressible steady-state Navier-Stokes (viscosity=0.01): clean data (Col 4), noisy inputs (Col 5) and noisy observations (Col 6) with max variance of added noise being $(\sigma^2_{\max})^i$ and $(\sigma^2_{\max})^t$, respectively. Reported test error has been averaged on three different runs with seeds 0, 1, and 2.} 
\label{table:results-navier-stokes-visc-0.01-nl-0.001}
\end{table*}

\section{Universal Approximation and Fast Convergence of FNO-DEQ}
\label{subsec:universal_approximation}

Though the primary contribution of our paper is empirical, we show (by fairly standard techniques) that FNO-DEQ is a universal approximator, under mild conditions on the steady-state PDEs. Moreover, we also show that in some cases, we can hope the fixed-point solver can converge rapidly.  

As noted in Definition~\ref{def:steady_state_PDE}, 
we have $\Omega:= \sT^d$.
We note that all continuous function 
$f \in L^2(\Omega; \R)$ and 
$\int_\Omega |f(x)| dx < \infty$
can be written as,
$f(x) = \sum_{\omega \in \Z^d} e^{i x^T \omega} \hat{f}_w$. 
where $\{\hat{f}_\omega\}_{\omega \in \Z^d}$
are the Fourier coefficients of the function $f$.
We define as $\lln$ as the space of functions such that for all $f_N \in \lln$ 
with Fourier coefficients that vanish outside a bounded ball.
Finally, we define an orthogonal projection operator $\project: \ll \to \lln$, 
such that 
for all $f \in \ll$ we have,
\begin{equation}
    \label{eq:projection_definition}
    f_n = \project(f) = \project\left(\sum_{\omega \in \Z^d} f_\omega e^{i x^T \omega}\right) = 
    \sum_{\|\omega\|_\infty \leq N} \hat{f}_\omega e^{i x^T \omega}.
\end{equation}
That is, the projection operator $\project$ takes an infinite dimensional function and projects it 
to a finite dimensional space. We prove the following universal approximation result:
\begin{theorem}
    \label{thm:main_theorem}
    Let $u^\star \in \lldu$ define the solution to a
    steady-state PDE
    in 
    Definition~\ref{def:steady_state_PDE},
    Then there exists an operator 
    $\gG: \lldu \times \lldf \to \lldu$
    such that,
    $u^\star = \gG(u^\star, f)$.
    Furthermore,
    for every $\epsilon > 0$ there exists an $N \in \N$
    such that
    for compact sets $K_u \subset \lldu$ and $K_f \subset \lldf$
    there exists a neural network 
    $G_\theta: \llndu \times \llndf \to \llndu$
    with parameters $\theta$,
    such that,
    $$\sup_{u \in K_u, f \in K_f} \|u^\star- G_\theta(\project u^\star, \project f)\|_{\ll} \leq \epsilon.$$
\end{theorem}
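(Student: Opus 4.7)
The plan is to exhibit a continuous fixed-point operator $\gG$ for the steady-state PDE and then invoke the universal approximation theorem for FNO due to \citet{kovachki2021universal} applied to $\gG$. For the first step, define
$$\gG(u, f) \;:=\; u - \bigl(L(a, u) - f\bigr),$$
i.e., one step of a residual/Picard-type update. Continuity of $\gG$ on $\lldu \times \lldf$ is inherited from the continuity of $L$ assumed in Definition~\ref{def:steady_state_PDE}. Because $u^\star$ satisfies the PDE, $L(a, u^\star) = f$, and hence $\gG(u^\star, f) = u^\star$, establishing that $u^\star$ is a fixed point of $\gG(\cdot, f)$. Any other continuous iterative scheme admitting $u^\star$ as a fixed point (preconditioned gradient descent, Newton, Picard on a reformulation) would serve equally well; the residual choice above is convenient because its continuity is transparent.

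For the second step, view the pair $(u, f)$ as a single element of the product Hilbert space $L^2(\Omega; \R^{d_u + d_f})$, so that $\gG$ becomes a continuous operator between two $L^2$ spaces. The FNO universal approximation theorem then yields: for any $\epsilon > 0$ and any compact $K \subset \lldu \times \lldf$, there exist $N \in \N$ and parameters $\theta$ defining an FNO $G_\theta \colon \llndu \times \llndf \to \llndu$ (the paired projection layer $\gP$ of Definition~\ref{d:projection} is precisely what is needed to consume the concatenated input $(\project u, \project f)$) such that
$$\sup_{(u, f) \in K} \bigl\|\gG(u, f) - G_\theta(\project u, \project f)\bigr\|_{\ll} \;\le\; \epsilon.$$
Taking $K := K_u \times K_f$ (compact in the product space by Tychonoff) and restricting the sup to pairs $(u^\star, f)$ with $u^\star \in K_u$, the fixed-point identity $\gG(u^\star, f) = u^\star$ gives
$$\bigl\|u^\star - G_\theta(\project u^\star, \project f)\bigr\|_{\ll} \;=\; \bigl\|\gG(u^\star, f) - G_\theta(\project u^\star, \project f)\bigr\|_{\ll} \;\le\; \epsilon,$$
which is exactly the claim.

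The PDE-specific content is therefore minimal: the bulk of the work is offloaded onto the cited FNO UAT, and what remains is producing the continuous $\gG$ above. The step I expect to require the most care is reconciling the Fourier truncation $\project$ with the hypotheses of \citet{kovachki2021universal}, since their theorem approximates $\gG$ on a compact subset of the full (untruncated) $L^2$ space, whereas our network receives inputs $(\project u, \project f)$ in the truncated space. This is routine but not entirely automatic: one uses that $\project v \to v$ in $L^2$ uniformly on compact sets as $N \to \infty$, together with uniform continuity of $\gG$ on the compact $K_u \times K_f$, to pick $N$ large enough that both the Fourier-truncation error $\|\gG(u, f) - \gG(\project u, \project f)\|$ and the FNO approximation error of the truncated operator are each at most $\epsilon/2$; a triangle inequality then gives the displayed bound. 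The ``$u \in K_u$'' in the sup should be read as indexing the solutions $u^\star$ of the family of PDEs parameterized by $f \in K_f$.
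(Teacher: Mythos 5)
Your proposal is correct and follows essentially the same route as the paper's proof: the same residual fixed-point operator $\gG(u,f) = u - (L(u) - f)$, the same appeal to the universal approximation theorem of \citet{kovachki2021universal}, and the same compactness/uniform-continuity argument to control the Fourier-truncation error (which the paper isolates as Lemma~\ref{lemma:discrete_close}) before concluding via the fixed-point identity and a triangle inequality. The only cosmetic difference is that the paper approximates $\project L(\project v)$, $\project v$, and $\project f$ by separate FNO components rather than invoking the UAT on $\gG$ as a single operator on the product space.
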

The proof for the above theorem is relatively straightforward and provided
in Appendix~\ref{sec:proof_of_universal_approximation}.
The proof uses the fact
that $u^\star$ 
is a fixed-point of the operator
$G(u, f) = u - (L(u) - f)$, allowing us to use the 
the results in~\citet{kovachki2021universal}
that show a continuous operator can be approximated
by a network as defined in \eqref{eq:fno_layer_def}.
Note that the choice of $G$ is by no means unique: one can ``universally approximate'' any operator $G(u, f) = u - A(L(u) - f)$, for a continuous operator $A$. 
Such a $G$ can be thought of as a form of ``preconditioned'' gradient descent, for a preconditioner $A$. For example, a Newton update has the form
$G(u,f) = u - L'(u)^{-1} \left(L(u) - f\right)$
,
where $L': L^2(\Omega; \R^{d_u}) \to L^2(\Omega; \R^{d_u})$
is the Frechet derivative of the operator $L$.  

The reason this is relevant is that the DEQ can choose to universally approximate a fixed-point equation for which the fixed-point solver it is trained with also converges rapidly. As an example, the following classical result shows that under Lax-Milgram-like conditions (a kind of strong convexity condition), Newton's method %
converges doubly exponentially fast:
\begin{lemma}[\cite{farago2002numerical}, Chapter 5]
    \label{lemma:fast_convergence}
   Consider the PDE defined Definition~\ref{def:steady_state_PDE},
   such that $d_u=d_v=d_f=1$.
   such that $L'(u)$ defines the Frechet derivative 
   of the operator $L$.
   If for all $u,v \in L^2(\Omega; \R)$ we have
   $\| L'(u) v\|_{\ll} \geq \lambda \|v\|_{\ll}$
   and 
   $\|L'(u) - L'(v)\|_{\ll} \leq \Lambda \|u - v\|_{\ll}$
   for $0 < \lambda \leq \Lambda <\infty $,
   then for the Newton update,
   $
       u_{t+1} \leftarrow u_t - L'(u_t)^{-1}\left(L(u_t) - f\right),
   $
   with $u_0 \in L^2(\Omega; \R)$, there exists an $\epsilon > 0$,
   such that  $\|u_T - u^\star\|_{\ll} \leq \epsilon$
   if 
    $
       T \geq \log 
       \left(
           \log \left(\frac{1}{\epsilon}\right) 
           /
           \log \left(\frac{2\lambda^2}{\Lambda\|L(u_0) - f\|_{\ll}}\right)
        \right).
    $
\end{lemma}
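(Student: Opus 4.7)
My plan is to follow the standard Newton-Kantorovich template: derive quadratic convergence of the residual, solve the resulting scalar recurrence, and convert the residual bound into an error bound. Throughout I write $r_t := L(u_t) - f$ and $\delta_t := u_{t+1} - u_t = -L'(u_t)^{-1} r_t$. The hypothesis $\|L'(u)v\|_\ll \geq \lambda \|v\|_\ll$ makes $L'(u_t)$ injective, and one invokes the Lax-Milgram consequence of the coercivity/Lipschitz hypotheses to assert surjectivity, so the Newton step is well-defined.

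The first and main analytical step is to show $\|r_{t+1}\|_\ll \leq \tfrac{\Lambda}{2\lambda^2}\|r_t\|_\ll^2$. By the fundamental theorem of calculus for Frechet derivatives,
\begin{equation*}
L(u_{t+1}) - L(u_t) \;=\; L'(u_t)\delta_t \;+\; \int_0^1 \bigl(L'(u_t + s\delta_t) - L'(u_t)\bigr)\delta_t \, ds,
\end{equation*}
and $L'(u_t)\delta_t = -r_t$ by definition, so adding $r_t = L(u_t) - f$ to both sides gives $r_{t+1} = \int_0^1(L'(u_t + s\delta_t) - L'(u_t))\delta_t \, ds$. The Lipschitz hypothesis bounds the integrand by $\Lambda s \|\delta_t\|_\ll^2$, yielding $\|r_{t+1}\|_\ll \leq \tfrac{\Lambda}{2}\|\delta_t\|_\ll^2$; coercivity then gives $\|\delta_t\|_\ll \leq \|r_t\|_\ll / \lambda$, producing the quadratic recurrence.

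Setting $q_t := \tfrac{\Lambda}{2\lambda^2}\|r_t\|_\ll$, the recursion $q_{t+1} \leq q_t^2$ iterates to $q_T \leq q_0^{2^T}$, equivalently $\|r_T\|_\ll \leq \tfrac{2\lambda^2}{\Lambda}\alpha^{-2^T}$ where $\alpha := 2\lambda^2/(\Lambda \|L(u_0)-f\|_\ll) > 1$ is exactly the quantity in the lemma. To convert to an error bound, I would use $\|\delta_t\|_\ll \leq \|r_t\|_\ll / \lambda$ together with the doubly exponential decay of $\|r_t\|_\ll$ to conclude that $\{u_t\}$ is Cauchy in $\ll$ (the geometric tail is dominated by its first term), hence converges to some $u^\star$ with $L(u^\star) = f$ by continuity, and along the way $\|u_T - u^\star\|_\ll \lesssim \|r_T\|_\ll / \lambda$. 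Imposing $\|u_T - u^\star\|_\ll \leq \epsilon$ amounts to $\alpha^{-2^T} \lesssim \epsilon \Lambda / \lambda$, and taking two logarithms gives $T \geq \log\bigl(\log(1/\epsilon)/\log \alpha\bigr)$ after absorbing multiplicative constants into $\epsilon$.

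The only delicate point I anticipate is the residual-to-error conversion, since the hypothesis only gives pointwise-in-$u$ coercivity of $L'(u)$ and not of the averaged operator $\int_0^1 L'(u^\star + s(u-u^\star))\, ds$ that would appear if one tried to directly relate $\|L(u)-L(u^\star)\|_\ll$ to $\|u - u^\star\|_\ll$ via a single Taylor expansion. Going through the Cauchy-sequence argument sidesteps this cleanly, provided one first observes that $L$ is continuous (which follows from boundedness of $L'$, itself a consequence of the Lipschitz bound plus finiteness at any single base point). Existence of $u^\star$ then emerges as part of the convergence analysis rather than being assumed separately, and the doubly exponential rate is inherited directly from the closed-form solution of $q_{t+1} \leq q_t^2$.
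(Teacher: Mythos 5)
Your proposal is correct, and its core—writing the Newton step as $\delta_t = -L'(u_t)^{-1}r_t$, using the fundamental theorem of calculus to get $r_{t+1} = \int_0^1\bigl(L'(u_t+s\delta_t)-L'(u_t)\bigr)\delta_t\,ds$, bounding this by $\tfrac{\Lambda}{2}\|\delta_t\|^2 \leq \tfrac{\Lambda}{2\lambda^2}\|r_t\|^2$, and iterating the quadratic recurrence—is exactly the paper's argument. Where you genuinely diverge is the residual-to-error conversion. The paper invokes an auxiliary lemma stating $\|v_1-v_2\|_{\ll} \leq \tfrac{1}{\lambda}\|L(v_1)-L(v_2)\|_{\ll}$, proved via a mean-value-type lower bound $\|L(v_1)-L(v_2)\|_{\ll} \geq \inf_{c\in[0,1]}\|L'(cv_1+(1-c)v_2)\|\,\|v_1-v_2\|_{\ll}$, and applies it with $v_1=u_{t+1}$, $v_2=u^\star$ (using $f=L(u^\star)$); this presumes $u^\star$ exists and, as you correctly flag, the pointwise bound $\|L'(u)v\|_{\ll}\geq\lambda\|v\|_{\ll}$ does not automatically transfer to the averaged operator appearing in that mean-value step, a point the paper's auxiliary lemma treats rather informally. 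Your alternative—showing $\|\delta_t\|_{\ll}\leq\|r_t\|_{\ll}/\lambda$ decays doubly exponentially once $q_0<1$, so the iterates are Cauchy, the limit $u^\star$ satisfies $L(u^\star)=f$ by continuity, and $\|u_T-u^\star\|_{\ll}\lesssim\|r_T\|_{\ll}/\lambda$ by summing the geometrically dominated tail—sidesteps that subtlety and additionally delivers existence of $u^\star$ as a conclusion rather than an assumption, at the mild cost of a $1/(1-q_0)$-type constant absorbed into $\epsilon$. The paper's route is shorter given its lemma; yours is more self-contained and more careful on exactly the point where the paper's argument is thinnest. Both yield the same doubly logarithmic bound on $T$ (and both implicitly require $\tfrac{2\lambda^2}{\Lambda\|L(u_0)-f\|_{\ll}}>1$ for the statement to be meaningful).
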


For completeness, we include the proof of the above lemma in the Appendix (Section ~\ref{sec:fast_convergence}).
We note that the conditions of the above lemma are satisfied for elliptic PDEs like Darcy Flow,
as well as many variational non-linear elliptic PDEs (e.g., those considered in ~\citet{marwah2022neural}). 
Hence, we can expect FNO-DEQs to quickly converge to the fixed point, since they employ quasi-Newton methods
like Broyden and Anderson methods~\citep{broyden1965class, anderson1965iterative}. %

\section{Conclusion}

In this work, we demonstrate that the inductive bias of deep equilibrium models---and weight-tied networks in general---makes them ideal architectures for approximating neural operators for steady-state PDEs.
Our experiments on steady-state Navier-Stokes equation 
and Darcy flow equations show that weight-tied models and FNO-DEQ 
perform outperform FNO models with $\sim 4\times$ the number of 
parameters and depth. 
Our findings indicate that FNO-DEQ and weight-tied architectures 
are, in general, more robust to both input and observation noise compared to non-weight-tied architectures, including FNO.
We believe that our results complement any future progress in the design and development of PDE solvers \citep{tran2021factorized, li2022fourier} for steady-state PDEs,
and hope that our work motivates the study of relevant inductive biases that could be used to improve them.

\section{Acknowledgements}
TM 
is supported 
by CMU Software Engineering Institute via Department of Defense under contract FA8702-15-D-0002.
AP
is supported 
by a grant from the Bosch Center for Artificial Intelligence.
ZL gratefully acknowledges the NSF (FAI 2040929 and IIS2211955), UPMC, Highmark Health, Abridge, Ford Research, Mozilla, the PwC Center, Amazon AI, JP Morgan Chase, the Block Center, the Center for Machine Learning and Health, and the CMU Software Engineering Institute (SEI) via Department of Defense contract FA8702-15-D-0002, for their generous support of ACMI Lab’s research.
JL
is supported in part by NSF award DMS-2012286, and 
AR
is
supported in part by NSF awards IIS-2211907, CCF-2238523, Amazon Research Award, and the  CMU/PwC DT\&I Center.

\bibliography{references}
\bibliographystyle{plainnat}

\clearpage
\appendix
\section*{Appendix}
\section{Implementation Details}
\label{sec:implementation_details}

\paragraph{Training details.} We train all the networks for 500 epochs  with Adam optimizer. 
The learning rate is set to 0.001 for Darcy flow and 0.005 for Navier-Stokes. We use learning rate weight decay of 1e-4 for both Navier-Stokes and Darcy flow. The batch size is set to 32.
In case of Darcy flow, we also use cosine annealing for learning rate scheduling. 
We run all our experiments on a combination of NVIDIA RTX A6000, NVIDIA GeForce RTX 2080 Ti and 3080 Ti. 
All networks can easily fit on a single NVIDIA RTX A6000, but training time varies between the networks. 

For FNO-DEQ, we use Anderson solver~\citep{anderson1965iterative} to solve for the fixed point in the forward pass. The maximum number of Anderson solver steps is kept fixed at 32 for Dary Flow, and 16 for Navier Stokes. For the backward pass, we use phantom gradients~\citep{geng2021training} which are computed as:
\begin{equation}
    \label{eq:phatom-grad}
    u^\star = \tau G_\theta(u^\star, a) + (1 - \tau)u^\star
\end{equation}
where $\tau$ is a tunable damping factor and $u^\star$ is the fixed point computed using Anderson solver in the forward pass. This step can be repeated $S$ times. We use $\tau=0.5$ and $S=1$ for Darcy Flow, and $\tau=0.8$ and $S=3$ for Navier-Stokes. 

For the S-FNO-DEQ used in \cref{table:results-darcy-flow-all}, we use Broyden's method \citep{broyden1965class} to solve for the fixed point in the forward pass and use exact implicit gradients, computed through implicit function theorem as shown in \cref{eq:implcit-grad-deq}, for the backward pass through DEQ. The maximum number of solver steps is fixed at 32.

For weight-tied networks, we repeatedly apply the FNO block to the input $12$ times for Darcy flow, and $6$ times for Navier-Stokes.

\paragraph{Network architecture details.} The width of an FNO layer set to 32 across all the networks. Additionally, we retain only 12 Fourier modes in FNO layer, and truncate higher Fourier modes. We use the code provided by \citet{li2020fourier} to replicate the results for FNO, and construct rest of the networks on top of this as described in \cref{sec:experiments}.

\section{Datasets}

\subsection{Darcy Flow}
\label{subsec:Darcy_flow_implementation}
As mentioned in \cref{sec:experiments} 
we use the dataset provided by~\cite{li2020fourier} for our experiments with steady-state Darcy-Flow.

All the models are trained on 1024 data samples and tested on 500 samples. The resolution of original images is $421 \times 421$ which we downsample to $85 \times 85$ for our experiments. For experiments with noisy inputs/observations, the variance of Gaussian noise that we add to PDEs are [0, 1e-9, 1e-8, 1e-7, 1e-6, 1e-5, 1e-4, 1e-3]. 
\begin{figure}[!htbp]
    \centering
    \includegraphics[height=\textwidth]{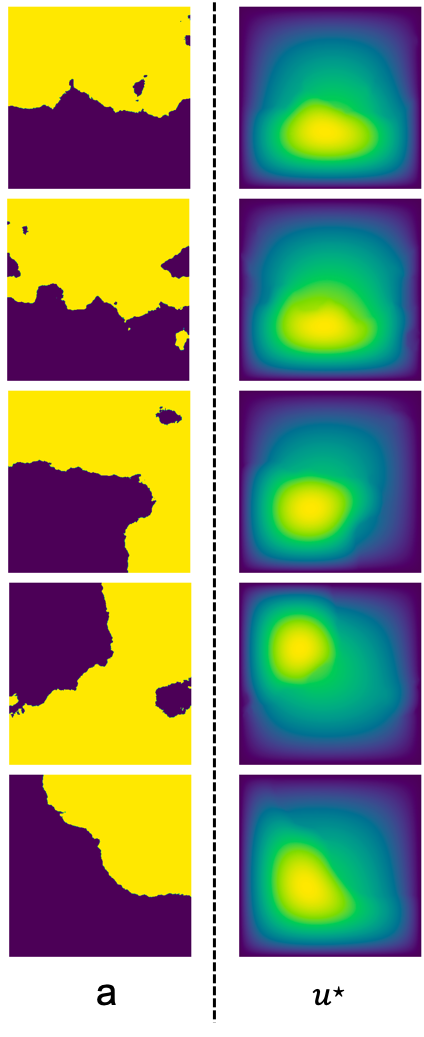}
    \label{fig:darcy_flow}
    \caption{Samples from Darcy Flow}
\end{figure}

\subsection{Steady-State Incompressible Fluid Navier-Stoke}
\label{subsec:navier_stokes_implementation}
\begin{align*}
    u \cdot \nabla \omega &= \nu \Delta \omega + f, \qquad x \in \Omega\\
    \nabla \cdot u &= 0 \qquad\qquad\quad\;\;  x \in \Omega
\end{align*}
To generate the dataset for steady-state Navier-Stokes, 
instead of solving the steady state PDE using 
steady-state solvers like the SIMPLE algorithm~\cite{patankar1983calculation},
we first choose the solution $\omega^\star := \nabla \times u^\star$
of the PDE and then generate the corresponding equation, i.e. calculate the corresponding force term 
$f = u^\star \cdot \nabla \omega^\star - \nu \Delta \omega^\star.$

To generate the solutions $\omega^\star$, we forward propagate a relatively simple initial distribution of $\omega_0$ (sampled from a Gaussian random field) through a
time-dependent Navier-Stokes equation in the vorticity form for a short period of time. This ensures our dataset contains solutions $\omega^*$ that are rich and complex.  Precisely, recall the Navier-Stokes equations in their vorticity form:
\begin{equation}
\label{eq:navier_stokes_time_dependent}
\begin{split}
    \partial_t \omega(x,t) + u(x,t) \cdot \nabla \omega(x,t)
    &= \nu \Delta \omega(x,t) + g(x)  \qquad x \in (0, 2\pi)^2, t \in [0, T]\\
    \nabla \cdot u(x,t) &= 0 \qquad x \in (0, 2\pi)^2, t \in [0, T]\\
    \omega(x, 0) &= \omega_0(x) \qquad x \in (0, 2\pi)^2
\end{split}
\end{equation}
where $g(x) = \nabla \times \tilde{g}(x)$ and 
$\tilde{g}(x) = \sin(5x_1)\hat{x_2}$ 
is a divergence free forcing term and $x = (x_1, x_2)$ are the two coordinates of the input vector.
We forward propagate the equations \eqref{eq:navier_stokes_time_dependent}
using a pseudo-spectral method
using the functions provided in JAX-CFD~\citep{kochkov2021machine,Dresdner2022-Spectral-ML} package. 
The initial vorticity $\omega_0$ is sampled from a 
Gaussian random field $\gN(0, (5^{3/2}(I + 25\Delta)^{-2.5}))$, which is then made divergence free.
We forward propagate the Navier-Stokes equation in~\eqref{eq:navier_stokes_time_dependent}
for time $T = 0.5$ with $dt=0.002$ to get $\omega(1, x)$, 
which we choose as the solution to the steady-state PDE in~\eqref{eq:navier_stokes}, i.e, $\omega^\star$
for Equation~\ref{eq:navier_stokes}.

Subsequently, we use the stream function 
$\Psi$~\citep{batchelor1967introduction} to calculate 
$u = \left(\partial \Psi/\partial x_1, \partial \Psi/\partial x_2\right)$ 
by solving the Poisson equation $\Delta \Psi = \omega$ 
in the Fourier domain.
Furthermore, since 
$f = u^\star \cdot \nabla \omega^\star - \nu \Delta \omega^\star$
we use the stream function to calculate $(f_1, f_2)$, i.e., the different components of the force term.

We use $4500$ training samples and $500$ testing samples.
The input to the network is the vector field $\tilde{f} = (f_1, f_2)$ 
and we learn a map that outputs the vorticity $\omega^\star$.
The resolution of grid used to generate the dataset is $256 \times 256$ 
which we downsample to $128 \times 128$ while training the models. For experiments with noisy inputs/observations, we consider two values of maximum  variance of Gaussian noise: 1e-3 and 4e-3. 
The variances of the Gaussian noise that we add to the PDEs for the latter case are [0, 1e-9, 1e-8, 1e-7, 1e-6, 1e-5, 1e-4, 1e-3, 2e-3, 4e-3].
However, when conducting experiments with a variance of 1e-3, we exclude the last two values of variance from this list.

\begin{figure}[!htbp]
    \centering
    \includegraphics[height=\textwidth]{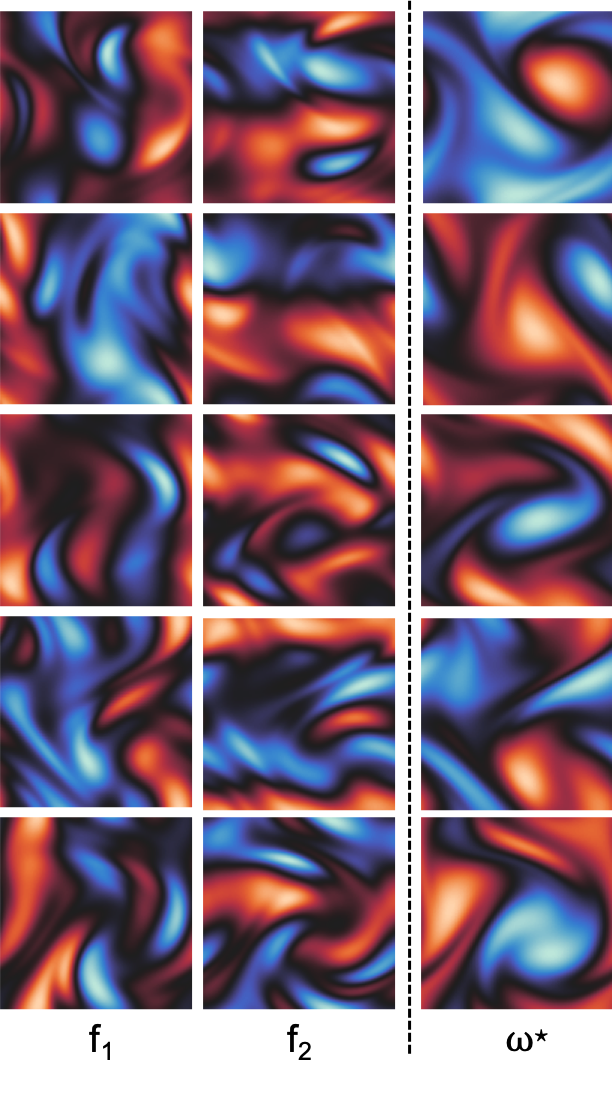}
    \label{fig:navier-stokes-data-visc-0.001}
    \caption{Samples from Steady-state Navier-Stokes dataset with viscosity $0.001$. 
    Each triplet visualizes the inputs $f_1$, $f_2$ and the ground truth output i.e. $\omega^\star$. }
\end{figure}

\begin{figure}[!htbp]
    \centering
    \includegraphics[height=\textwidth]{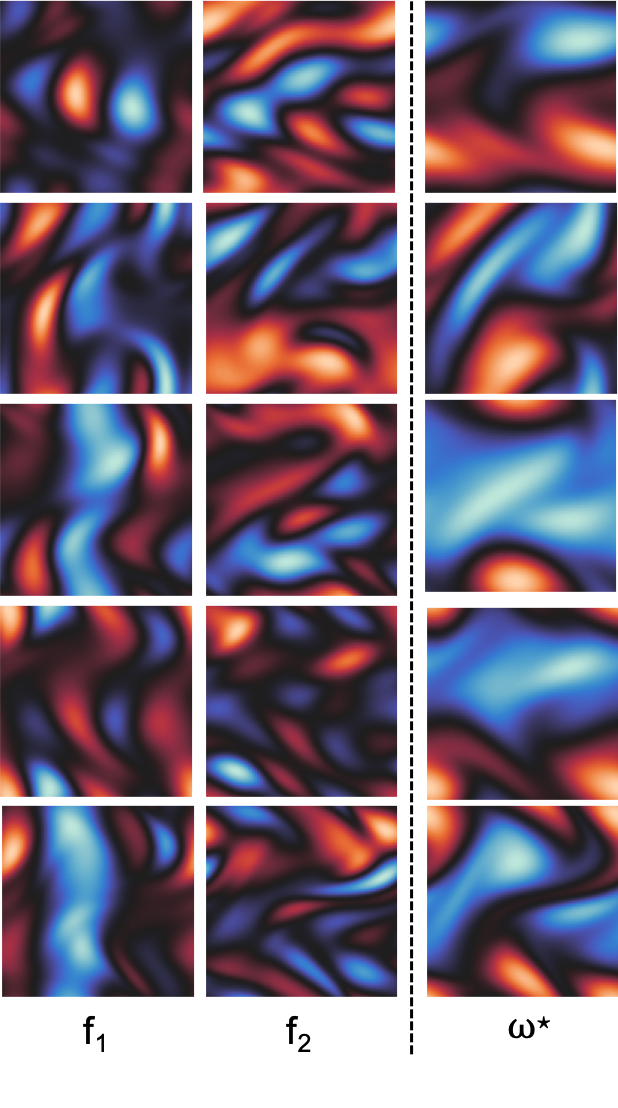}
    \label{fig:navier-stokes-data-visc-0.01}
    \caption{Samples from Steady-state Navier-Stokes dataset with viscosity $0.01$. 
    Each triplet visualizes the inputs $f_1$, $f_2$ and the ground truth output i.e. $\omega^\star$. }
\end{figure}
\section{Proof of Universal Approximation}
\label{sec:proof_of_universal_approximation}

The proof of the universal approximation essentially follows from the result on the universal approximation 
capabilities of FNO layers in~\citet{kovachki2021universal}, applied to $\calG(v,f) = v - (Lv - f)$. For the sake of completeness, we reitarate the key steps.  

For simplicity, we will assume that 
$d_u = d_v = d_f = 1$. (The results straightforwardly generalize.)
We will first establish some key technical lemmas and introduce some notation and definitions useful 
for the proof for Theorem~\ref{thm:main_theorem}.

\begin{definition}
    An operator $T: L^2(\Omega; \R) \to L^2(\Omega; \R)$
    is continuous 
    at $u \in L^2(\Omega; \R)$ if for every 
    $\epsilon > 0$, there exists a $\delta > 0$,
    such that for all $v \in \ll$ with $\|u - v\|_{\ll}\leq\delta$,
    we have 
    $\|L(u) - L(v)\|_{\ll} \leq \epsilon$.
\end{definition}

First, we approximate the infinite dimensional operator $\calG: \ll \times \ll \to \ll$
by projecting the functions in $\ll$ to a finite-dimensional approximation $\lln$, and considering the action of the operator on this subspace.  
The linear projection we use is the one introduced in~\eqref{eq:projection_definition}.
More precisely we show the following result, 

\begin{lemma}
    \label{lemma:discrete_close}
    Given a continuous operator $L: \ll \to \ll$ as defined in~\eqref{eq:lu_f}, let us define an operator $\calG: \ll \times \ll \to \ll$
 as $\calG(v, f):= v - (L(v) - f)$.
    Then, for every $\epsilon > 0$ there exists an $N \in \N$
    such that for all $v, f$ in any compact set $K \subset \ll$, 
    the operator $\calG_N = \project \calG(\project v, \project f)$ 
    is an $\epsilon$-approximation of $\calG(v, f)$, i.e., we have,
    $$\sup_{v,f \in K} \|\calG(v, f) - \calG_N(v, f)\|_{\ll} \leq \epsilon.$$
\end{lemma}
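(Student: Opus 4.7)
The plan is to bound $\|\calG(v,f) - \calG_N(v,f)\|_{\ll}$ by splitting it via the triangle inequality into an \emph{outer projection} error $\|\calG(v,f) - \project\calG(v,f)\|_{\ll}$ and an \emph{inner projection} error $\|\project\calG(v,f) - \project\calG(\project v, \project f)\|_{\ll}$. Since $\project$ is an orthogonal truncation onto a subspace of $\ll$, its operator norm is at most $1$, so controlling the second term reduces to controlling $\|\calG(v,f) - \calG(\project v, \project f)\|_{\ll}$. I will then pick $N$ large enough that each of these two pieces is at most $\epsilon/2$ uniformly over $K \times K$.

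For the outer term, I would use that Fourier partial sums converge in $L^2$, i.e. $\project g \to g$ in $\ll$ for every fixed $g \in \ll$. To promote this pointwise convergence to uniform convergence on the set $\calG(K \times K) := \{\calG(v,f) : (v,f) \in K \times K\}$, I would first observe that this image set is compact in $\ll$: the map $\calG(v,f) = v - L(v) + f$ is continuous as $L$ is continuous and the identity and $f$-addition are continuous, and the continuous image of the compact set $K \times K$ is compact. Then a standard $\epsilon$-net argument gives uniform convergence: cover $\calG(K \times K)$ by finitely many balls of radius $\epsilon/6$ centered at $g_1, \dots, g_m$, pick $N$ large enough that $\|g_i - \project g_i\|_{\ll} \leq \epsilon/6$ for each $i$, and use $\|\project\|_{\op}\leq 1$ to get $\|g - \project g\|_{\ll} \leq \epsilon/2$ for every $g \in \calG(K \times K)$.

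For the inner term, after dropping the outer $\project$ using $\|\project\|_{\op}\leq 1$, I would invoke continuity of $\calG$ on the compact set $K \times K$, which upgrades to uniform continuity: there is a modulus $\delta(\epsilon) > 0$ such that if $\|v - v'\|_{\ll} + \|f - f'\|_{\ll} \leq \delta$ then $\|\calG(v,f) - \calG(v',f')\|_{\ll} \leq \epsilon/2$. Applying the same $\epsilon$-net argument as above but now to the compact set $K$ itself, I can choose $N$ large enough that $\|v - \project v\|_{\ll} \leq \delta/2$ and $\|f - \project f\|_{\ll} \leq \delta/2$ simultaneously for all $(v,f) \in K \times K$. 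Combining with uniform continuity bounds the inner term by $\epsilon/2$.

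The main technical obstacle is precisely this passage from pointwise $\ll$-convergence of Fourier truncations to \emph{uniform} convergence over compact input sets; everything else is routine. Both the outer and inner $\epsilon$-net arguments hinge on compactness, so the proof is really an application of two facts: (i) continuous operators are uniformly continuous on compact sets, and (ii) Fourier truncation projections converge uniformly to the identity on compact subsets of $\ll$. Taking the larger of the two $N$'s produced by the two arguments and adding the estimates yields $\sup_{v,f \in K}\|\calG(v,f) - \calG_N(v,f)\|_{\ll} \leq \epsilon$, as desired.
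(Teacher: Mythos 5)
Your proposal is correct and follows essentially the same route as the paper's proof: the same triangle-inequality split into an outer projection error and an inner error controlled via non-expansiveness of $\project$, with compactness (of $K$ and of the continuous image of $K$) and uniform continuity on compact sets supplying the uniform choice of $N$. The only difference is presentational—you make the $\epsilon$-net argument for uniform convergence of $\project$ on compact sets explicit and invoke uniform continuity of $\calG$ directly, where the paper expands $\calG$ and uses a modulus of continuity for $L$—so the substance is the same.
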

\begin{proof}
    Note that for an $\epsilon > 0$
    there exists an $N = N(\epsilon, d)$ 
    such that for all $v \in K$ 
    we have 
    $$\sup_{v \in K} \|v - \project v\|_{\ll} \leq \epsilon.$$
    Therefore, using the definition of $\calG_N$ we can bound the $\ll$ norm of the difference between 
    $\calG$ and $\calG_N$ as follows,
     \begin{align*}
         &\|\calG(v, f) - \project \calG(v_n, f_n)\|_{\ll} \\
         &\leq 
            \|\calG(v, f) - \project \calG(v, f)\|_{\ll} 
            + \|\project \calG(v, f) - \project \calG(\project v, \project f)\|_{\ll} \\
        &\leq 
            \underbrace{\|\calG(v, f) - \project \calG(v, f)\|_{\ll}}_{I}
            + \underbrace{\| \calG(v, f) - \calG(\project v, \project f)\|_{\ll}}_{II}
     \end{align*}
     We first bound the term $I$ as follows:
     \begin{align*}
        &\|\calG(v, f) - \project \calG(v, f)\|_{\ll} \\
            &= \left\|v - (L(v) - f) - \project\left(v - (L(v) - f)\right)\right\|_{\ll}\\
            &= \|v - \project v \|_{\ll} + \|f - \project f\|_{\ll} + \|L(v) - \project L(v)\|_{\ll} \\
            &= \epsilon + \epsilon + \|L(v) - \project L(v)\|_{\ll} \numberthis \label{eq:lemma_1_eq1}
     \end{align*}
     Since $L$ is continuous, 
     for all compact sets $K \subset \ll$, $L(K)$ is compact as well. This is because: (1) for any $u \in K$, $\|L(u)\|_{\ll}$ is finite; (2) for any $v \in K$, $\|L(v)\|_{\ll} \leq \|L(u)\|_{\ll} + C\|u-v\|_{\ll}$.  
     Therefore, 
     for every $\epsilon > 0$, there exists an $N\in\N$ such that 
     \begin{align*}
         \sup_{v \in K} \|L(v) -  \project L(v)\|_{\ll} \leq \epsilon.
     \end{align*}
     Substituting the above result in~\eqref{eq:lemma_1_eq1}, we have
     \begin{equation}
         \label{eq:lemma1_term1_upper_bound}
         \|\calG(v, f) - \project \calG(v, f)\|_{\ll} \leq 3\epsilon.
     \end{equation}

     Similarly, for all $v \in K$ where $K$ is compact, 
     we can bound Term $II$ as following,
     \begin{align*}
         &\left \|\calG(v, f) - \calG(\project v, \project f)\right\|_{\ll} \\
         &\leq \left\|v - (L(v) - f) - \project v - (L(\project v) - \project f)\right\|_{\ll} \\
         &\leq \|v - \project v\|_{\ll} + \|f - \project f\|_{\ll} + \|L(v) - L(\project v)\|_{\ll} \\
         &\leq \epsilon + \epsilon + \|L(v) - L(\project v)\|_{\ll}. \numberthis \label{eq:lemma_2_eq2}
     \end{align*}
    Now, since $v \in K$ and $L:\ll \to \ll$ is a continuous operator, 
    there exists a modulus of continuity (an increasing real valued function)
    $\alpha \in [0, \infty)$, such that
    for all $v \in K$, we have
    \begin{align*}
        \|L(v) - L(\project v)\|_{\ll} \leq \alpha\left(\|v - \project v\|_{\ll}\right)
    \end{align*}
    Hence for every $\epsilon > 0$ 
    there exists %
    an $N \in \N$ such that, 
    $$\alpha(\|v - \project v\|_{\ll}) \leq \epsilon.$$
    Plugging these bounds in~\eqref{eq:lemma_2_eq2}, we get, 
    \begin{equation}
         \label{eq:lemma1_term2_upper_bound}
         \left \|\calG(v, f) - \calG(\project v, \project f)\right\|_{\ll} 
         \leq 3\epsilon.
    \end{equation}
    Therefore, combining~\eqref{eq:lemma1_term1_upper_bound} and~\eqref{eq:lemma1_term2_upper_bound}
    then for
    $\epsilon > 0$, there exists an 
    $N \in \N$, such that for all $v,f \in K$ 
    we have
    \begin{align}
        \sup_{v,f\in K}\left\|\calG(v, f) - \project \calG(v_n, f_n)\right\|_{\ll} \leq 6\epsilon.
    \end{align}
    Taking $\epsilon' = 6\epsilon$ proves the claim.
\end{proof}

\begin{proof}[Proof of Theorem~\ref{thm:main_theorem}]
    For Lemma~\ref{lemma:discrete_close}
    we know that there exists a finite dimensional 
    projection for the operator $\gG$, defined as $\gG_N(v, f)$ such that 
    for all $v, f \in \ll$ we have
    $$\|\gG(v, f) - \gG_N(v, f)\|_{\ll} \leq \epsilon.$$

    Now using the definition of $\gG_N(v, f)$ we have
    \begin{align*}
        \gG_N(v, f) &= \project\gG(\project v, \project f)\\
        &= \project v - \left(\project L(\project v) - \project f\right)
    \end{align*}
    From~\citet{kovachki2021universal}, Theorem 2.4 we know 
    that there exists an FNO network $G_{\theta^L}$ 
    of the form defined in 
    ~\eqref{eq:fno_layer_def} such that 
    for all $v \in K$, where $K$ is a compact set, 
    there exists an $\epsilon^L$
    we have
    \begin{equation}
    \sup_{v \in K} \|\project L(\project v) - G_{\theta^L}\|_{\ll} \leq 
        \epsilon^L
    \end{equation}
    Finally, 
    note that from Lemma~D.1 in~\cite{kovachki2021universal},
    we have
    that for any $v \in K$, 
    there exists an FNO layers
    $G_{\theta^f} \in \ll$ and $G_{\theta^v} \in \ll$ defined 
    in~\eqref{eq:fno_layer}
    such that
    \begin{equation}
    \sup_{v \in K} \|\project v - G_{\theta^v}\|_{\ll} \leq 
        \epsilon^v
    \end{equation}
    and 
    \begin{equation}
    \sup_{f \in K} \|\project f - G_{\theta^f}\|_{\ll} \leq 
        \epsilon^f
    \end{equation}
    for $\epsilon^v > 0$ and $\epsilon^f > 0$.

    Therefore there exists an $\tilde{\epsilon} > $ 
    such that there is an FNO network
    $G_\theta: \ll \times \ll \to \ll$
    where $\theta := \{\theta^L, \theta^v, \theta^f\}$
    such that
    \begin{equation}
        \label{eq:neural_network_existence}
        \sup_{v \in K, f \in \ll} 
            \|\gG_N(v, f) - G_\theta(v, f)\|_{\ll} \leq 
            \tilde{\epsilon}
    \end{equation}

    Now, since we know that $u^\star$ is the fixed point 
    of the operator $\gG$ we have from Lemma~\ref{lemma:discrete_close} and \eqref{eq:neural_network_existence},
    \begin{align*}
        \|\gG(u^\star, f) - G_\theta(u^\star, f)\|_{\ll} 
        &\leq \|u^\star - \gG_N(u^\star, f)\|_{\ll} 
            + \|\gG_N(u^\star, f) -  G_\theta(u^\star, f)\|_{\ll}\\
        &\leq \tilde{\epsilon} + \epsilon.
    \end{align*}
\end{proof}

\section{Fast Convergence for Newton Method}
\label{sec:fast_convergence}

\begin{definition}[Frechet Derivative in $\ll$]
    For a continuous operator $F: \ll \to \ll$, 
    the Frechet derivative at $u \in \ll$
    is a linear operator $F'(u): \ll \to \ll$ such that for all $v\in \ll$ we have
    \begin{align*}
        \lim_{\|v\|_{\ll} \to 0} \frac{\|F(u + v) - F(u) - F'(u)(v)\|_{\ll}}{\|v\|_{\ll}} = 0.
    \end{align*}
\end{definition}

\begin{lemma}
    \label{lemma:frechet_upperbound}
    Given the operator $L: \ll \to \ll$
    with Frechet derivative $L'$,
    such that for all $u, v \in \ll$, we have
    $\|L'(u)(v)\|_{\ll} \geq \lambda \|v\|_{\ll}$, then 
    $L'(u)^{-1}$ exists and we have,
    for all $v_1, v_2 \in \ll$:
    \begin{enumerate}
        \item $\|L'(u)^{-1}(v_1)\|_{\ll} \leq \frac{1}{\lambda}\|v_1\|_{\ll}$.
        \item $\|v_1 - v_2\|_{\ll} 
            \leq \frac{1}{\lambda}\|L(v_1) - L(v_2)\|_{\ll}$
    \end{enumerate}
\end{lemma}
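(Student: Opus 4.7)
The two claims are tightly linked: claim 2 will essentially follow from claim 1 once one applies it inside an integral representation. For claim 1, I first note that the hypothesis $\|L'(u)(v)\|_{\ll} \geq \lambda \|v\|_{\ll}$ forces $L'(u)$ to be injective with closed range; combined with the Lax--Milgram-style assumptions implicit in the parent lemma (which supply surjectivity on the Hilbert space $\ll$ via the variational formulation), this yields invertibility of $L'(u)$. Given invertibility, the norm bound is a one-line computation: setting $w = L'(u)^{-1}(v_1)$ gives $L'(u)(w) = v_1$, and applying the hypothesis to $w$ produces
\[
\|v_1\|_{\ll} \;=\; \|L'(u)(w)\|_{\ll} \;\geq\; \lambda \|w\|_{\ll} \;=\; \lambda \|L'(u)^{-1}(v_1)\|_{\ll},
\]
which rearranges to claim 1.

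For claim 2, the plan is to start from the fundamental theorem of calculus for Frechet derivatives,
\[
L(v_1) - L(v_2) \;=\; \int_0^1 L'\bigl(v_2 + t(v_1-v_2)\bigr)(v_1 - v_2)\, dt,
\]
and convert the pointwise lower bound on each $L'(\gamma(t))$ into a lower bound on the integral. The cleanest route pairs both sides with $v_1 - v_2$ in the $\ll$ inner product and invokes the coercivity $\langle L'(u)w, w\rangle_{\ll} \geq \lambda \|w\|^2_{\ll}$ that accompanies the bounded-below hypothesis in the elliptic PDE setting of the parent lemma. This yields
\[
\langle L(v_1) - L(v_2),\, v_1 - v_2\rangle_{\ll} \;\geq\; \lambda \|v_1 - v_2\|^2_{\ll},
\]
and Cauchy--Schwarz then gives claim 2.

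The \textbf{main obstacle} is the gap between the hypothesis (a pointwise norm lower bound on $L'(u)$) and what is effectively needed for claim 2 (either a bilinear-form coercivity bound, or global invertibility of $L$). A pointwise lower bound on $\|L'(\gamma(t))w\|_{\ll}$ alone cannot in general yield a lower bound on $\|\int_0^1 L'(\gamma(t))w\,dt\|_{\ll}$, since the integrand values might point in canceling directions. The proof therefore has to lean on the extra structure implicit in the Lax--Milgram-type setting, and the write-up should flag this dependence explicitly. A clean alternative that avoids coercivity is to use claim 1 to show that $L$ is a global diffeomorphism on $\ll$ via Hadamard's global inverse function theorem, with $L^{-1}$ Lipschitz of constant $1/\lambda$; substituting $y_i = L(v_i)$ into $\|L^{-1}(y_1) - L^{-1}(y_2)\|_{\ll} \leq \tfrac{1}{\lambda}\|y_1 - y_2\|_{\ll}$ then immediately gives claim 2.
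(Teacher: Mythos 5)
Your treatment of claim 1 coincides with the paper's: set $w = L'(u)^{-1}(v_1)$, apply the lower bound to $w$, and rearrange. (Like you, the paper never actually establishes surjectivity of $L'(u)$ --- it simply asserts invertibility --- so your remark that existence of the inverse needs more than the bounded-below hypothesis, e.g.\ a Lax--Milgram-type argument supplying surjectivity, is a more careful reading than the paper's.) For claim 2 your route genuinely departs from the paper's. The paper argues via a mean-value-type inequality: it claims $\|L(v_1)-L(v_2)\|_{\ll} \geq \inf_{c\in[0,1]}\|L'(c v_1+(1-c)v_2)\|\,\|v_1-v_2\|_{\ll}$ and then inserts the lower bound $\lambda$. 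That is precisely the step you flag as the main obstacle: a pointwise lower bound on the operators $L'(\gamma(t))$ along the segment does not in general bound $\bigl\|\int_0^1 L'(\gamma(t))(v_1-v_2)\,dt\bigr\|_{\ll}$ from below, since the integrand can cancel, so the paper's one-line argument is itself incomplete as written and implicitly relies on the extra structure you identify. Of your two repairs, the coercivity route effectively changes the hypothesis (it needs $\langle L'(u)w,w\rangle_{\ll}\ge\lambda\|w\|_{\ll}^2$, which is close to the inner-product condition the authors say appeared in an earlier version, per their footnote), whereas the Hadamard--Levy route is the cleaner fix: granting that each $L'(u)$ is invertible and $L$ is $C^1$ with $\sup_u\|L'(u)^{-1}\|\le 1/\lambda$, the map $L$ is a global diffeomorphism with $\tfrac{1}{\lambda}$-Lipschitz inverse, and claim 2 follows by substituting $y_i=L(v_i)$. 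So your plan is sound where it is explicit, its caveats land exactly where the paper's own proof is thin, and the only thing to note is that both of your routes (like the paper's) use assumptions --- coercivity, or $C^1$ regularity plus invertibility of $L'(u)$ --- that go slightly beyond the literal statement of the lemma.
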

\begin{proof}
    Note that for all $u, v' \in \ll$ we have,
    \begin{align*}
        \|L'(u)v'\|_{\ll} \geq \lambda \|v'\|_{\ll}
    \end{align*}
    Taking $v = L'(u)^{-1}(v')$, we have
    \begin{align*}
        \|L'(u)\left(L'(u)^{-1}(v)\right)\|_{\ll} &\geq \lambda \|L^{-1}(u)(v)\|_{\ll}\\
        \implies
        \frac{1}{\lambda}\|v\|_{\ll} &\geq  \|L^{-1}(u)(v)\|_{\ll}.
    \end{align*}

    For part $2$, note that 
    there exists a $c \in [0, 1]$ 
    such that
    \begin{align*}
        \|L(v_1) - L(v_2)\|_{\ll} \geq \inf_{c \in [0, 1]}\|L'(c v_1  + (1 - c)v_2)\|_{2}\|v_1 - v_2\|_{\ll}
        \geq \lambda \|v_1 - v_2\|_{\ll}.
    \end{align*}
\end{proof}

We now show the proof for Lemma~\ref{lemma:fast_convergence}.
The proof is standard and can be found in~\cite{farago2002numerical}, 
however we include the 
complete proof here for the sake of completeness.

We restate the Lemma here for the convenience of the reader.
\begin{lemma}[\cite{farago2002numerical}, Chapter 5]
    \label{lemma:fast_convergence}
   Consider the PDE defined Definition~\ref{def:steady_state_PDE},
   such that $d_u=d_v=d_f=1$.
   such that $L'(u)$ defines the Frechet derivative 
   of the operator $L$.
   If for all $u,v \in L^2(\Omega; \R)$ we have
   $\| L'(u) v\|_{\ll} \geq \lambda \|v\|_{\ll}$
   \footnote{We note that this condition is different from the 
   condition on the inner-product in the submitted version 
   of the paper, which had. 
   $\langle L'(u), v\rangle_{\ll} \geq \lambda \|v\|_{\ll}$.
   }
   and 
   $\|L'(u) - L'(v)\|_{\ll} \leq \Lambda \|u - v\|_{\ll}$
   for $0 < \lambda \leq \Lambda <\infty $,
   then for the Newton update,
   $
       u_{t+1} \leftarrow u_t - L'(u_t)^{-1}\left(L(u_t) - f\right),
   $
   with $u_0 \in L^2(\Omega; \R)$, there exists an $\epsilon > 0$,
   such that  $\|u_T - u^\star\|_{\ll} \leq \epsilon$
   if 
    \footnote{We note that this rate is different from the one in 
    the submitted version of the paper. }
    $
       T \geq \log 
       \left(
           \log \left(\frac{1}{\epsilon}\right) 
           /
           \log \left(\frac{2\lambda^2}{\Lambda\|L(u_0) - f\|_{\ll}}\right)
        \right).
    $
\end{lemma}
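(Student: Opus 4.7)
}

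The plan is to run the classical quadratic-convergence argument for Newton's method, adapted to the operator setting with Frechet derivatives, and then solve the resulting recursion to extract the doubly-exponential rate.

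First I would track the residual $r_t := L(u_t) - f$ rather than the error $u_t - u^\star$ directly. The invertibility of $L'(u_t)$ and the bound $\|L'(u_t)^{-1}(v)\|_{\ll} \le \frac{1}{\lambda}\|v\|_{\ll}$ follow immediately from Lemma~\ref{lemma:frechet_upperbound}, so the Newton update is well-defined and
\[
\|u_{t+1} - u_t\|_{\ll} = \|L'(u_t)^{-1}(r_t)\|_{\ll} \le \tfrac{1}{\lambda}\|r_t\|_{\ll}.
\]
Next, by the Frechet version of the fundamental theorem of calculus, I would write
\[
L(u_{t+1}) - L(u_t) = \int_0^1 L'\!\bigl(u_t + s(u_{t+1}-u_t)\bigr)(u_{t+1}-u_t)\,ds,
\]
and then use the defining property of Newton's update, $L'(u_t)(u_{t+1}-u_t) = -r_t$, to obtain
\[
r_{t+1} = \int_0^1 \bigl[L'(u_t + s(u_{t+1}-u_t)) - L'(u_t)\bigr](u_{t+1}-u_t)\,ds.
\]
Applying the $\Lambda$-Lipschitz hypothesis on $L'$ under the integral and integrating $s$ from $0$ to $1$ yields the standard quadratic bound $\|r_{t+1}\|_{\ll} \le \tfrac{\Lambda}{2}\|u_{t+1}-u_t\|_{\ll}^2 \le \tfrac{\Lambda}{2\lambda^2}\|r_t\|_{\ll}^2$.

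Setting $a_t := \tfrac{\Lambda}{2\lambda^2}\|r_t\|_{\ll}$, this collapses to the clean scalar recursion $a_{t+1} \le a_t^2$, hence $a_t \le a_0^{2^t}$. Substituting back and using Lemma~\ref{lemma:frechet_upperbound}(2) with $L(u^\star)=f$ to convert residuals to errors,
\[
\|u_t - u^\star\|_{\ll} \le \tfrac{1}{\lambda}\|r_t\|_{\ll} \le \tfrac{2\lambda}{\Lambda}\Bigl(\tfrac{\Lambda\|L(u_0)-f\|_{\ll}}{2\lambda^2}\Bigr)^{2^t}.
\]
Finally, demanding that the right side be at most $\epsilon$ and taking logarithms twice gives the stated bound on $T$, provided the initial quantity $\tfrac{\Lambda\|L(u_0)-f\|_{\ll}}{2\lambda^2} < 1$ so that both $\log$'s are of consistent sign.

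The main obstacle I anticipate is the Frechet Taylor-remainder step: one has to justify the integral representation of $L(u_{t+1})-L(u_t)$ and the pointwise application of Lipschitzness under the integral in $L^2(\Omega;\R)$. Given that $L'$ is assumed Lipschitz as an operator-valued map, this is a standard Bochner-integral argument, but it is the one place where the infinite-dimensional nature of the problem needs care beyond the finite-dimensional Newton proof. Everything else is algebraic manipulation of the recursion and a clean invocation of Lemma~\ref{lemma:frechet_upperbound}.
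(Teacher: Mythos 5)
Your proposal is correct and follows essentially the same route as the paper's proof: Taylor expansion with integral remainder, cancellation of the first-order term via the Newton update, the Lipschitz bound on $L'$ yielding the quadratic residual recursion $\|L(u_{t+1})-f\|_{\ll} \le \tfrac{\Lambda}{2\lambda^2}\|L(u_t)-f\|_{\ll}^2$, conversion of residuals to errors via Lemma~\ref{lemma:frechet_upperbound}, and unrolling plus a double logarithm under the condition $\tfrac{\Lambda}{2\lambda^2}\|L(u_0)-f\|_{\ll} \le 1$. The only cosmetic difference is your explicit scalar substitution $a_t = \tfrac{\Lambda}{2\lambda^2}\|r_t\|_{\ll}$ and your remark on justifying the Bochner-integral step, which the paper carries out pointwise in $x$ without comment.
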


\begin{proof}[Proof of Lemma~\ref{lemma:fast_convergence}]
    Re-writing the updates in Lemma~\ref{lemma:fast_convergence} as,
    \begin{align}
        u_{t+1} &= u_t + p_t \\
        L'(u_t) p_t &= -(L(u_t) - f) \numberthis \label{eq:Lprime}
    \end{align}
    Now, upper bounding $L(u_{t+1}) - f$ for all $x \in \Omega$
    we have,
    \begin{align*}
        & L(u_{t+1}(x)) - f(x)  \\
            &= L(u_t(x)) - f(x)
            + \int_{0}^1\left(L'(u_t(x) + t(u_{t+1}(x) - u_t(x)))\right)(u_{t+1}(x) - u_t(x))dt\\
            &= L(u_t(x)) - f(x)  + L'(u_t(x))p_t(x)
            + \int_{0}^1\left(L'(u_t(x) + t(u_{t+1}(x) - u_t(x))) - L'(u_t(x))\right)p_t(x)dt\\
            &= 
            \int_{0}^1\left(L'(u_t(x) + t(u_{t+1}(x) - u_t(x))) - L'(u_t(x))\right)p_t(x)dt
    \end{align*}    
    where we use ~\eqref{eq:Lprime} in the final step.
    
    Taking $\ll$ norm on both sides and using the fact that
    $\|L'(u) - L'(v)\|_{\ll} \leq \Lambda \|u - v\|_{\ll}$, 
    we have 
    \begin{align*}
        \|L(u_{t+1}) - f\|_{\ll}
        \leq \int_0^1
        \Lambda t \|u_{t+1} - u_t\|_{\ll}\|p_t\|_{\ll} dt
    \end{align*}
    
    Noting that for all $x \in \Omega$, we have
    $u_{t+1} - u_t = p_t$, 
    and using the fact that for all $u,v$ 
    $\|L'(u)^{-1}v\|_{\ll}\leq\frac{1}{\lambda}\|v\|_{\ll}$
    we have,
    $\|L'(u_t)p_t\|_{\ll} \leq \frac{1}{\lambda} \|p_t\|_{\ll}$
    \begin{align*}
         \|L(u_{t+1}) - f\|_{\ll}
         &\leq
            \int_{0}^1 \Lambda t\|u_{t+1} - u\|_{\ll}\|p_t\|_{\ll} dt\\
        &\leq \Lambda/2 \|p_t\|_{\ll}^2\\
        &\leq \Lambda/2 \|-L'(u_t)^{-1} ( L(u_t) - f)\|_{\ll}^2\\
        &\leq \frac{\Lambda}{2\lambda^2}\|L(u_t) - f)\|_{\ll}^2
    \end{align*}    
    where we use the result from Lemma~\ref{lemma:frechet_upperbound}
    in the last step.

    Therefore we have
    \begin{align*}
         \|L(u_{t+1}) - f\|_{\ll}
         &\leq
         \left(\frac{\Lambda}{2\lambda^2}\right)^{2^t - 1}
         \left(L(u_0) - f\right)^{2^t}\\
         \implies
         \|L(u_{t+1}) - f\|_{\ll}
         &\leq
         \left(\frac{\Lambda}{2\lambda^2}\right)^{2^t - 1}
         \left(L(u_0) - L(u^\star)\right)^{2^t}\\
         \implies
         \|u_{t+1} - u^\star\|_{\ll}
         &\leq
         \frac{1}{\lambda}\left(\frac{\Lambda}{2\lambda^2}\right)^{2^t - 1}
         \left\|L(u_0) - L(u^\star)\right\|_{\ll}^{2^t}.
    \end{align*}
    Therefore, if 
    $$\frac{\Lambda}{2\lambda^2}\|L(u_0) - L(u^\star)\|_{\ll}\leq 1,$$
    then we have 
    \begin{align*}
         \|u_{t+1} - u^\star\|_{\ll}
         \leq \epsilon,
    \end{align*}
    for
   \begin{align*}
       T \geq \log 
       \left(
           \log \left(\frac{1}{\epsilon}\right) 
           /
           \log \left(\frac{2\lambda^2}{\Lambda\|L(u_0) - f\|_{\ll}}\right)
        \right).
   \end{align*}
\end{proof}

\section{Additional experimental results} 
\label{sec:additional-experiments}

We provide additional results for Navier-Stokes equation for noisy inputs and observations in \cref{table:results-navier-stokes-visc-0.001-nl-0.004} and \cref{table:results-navier-stokes-visc-0.01-nl-0.004}. For these experiments, the maximum variance of Gaussian noise added to inputs and observations is 0.004. We observe that weight-tied FNO and FNO-DEQ outperform non-weight-tied architectures.

\begin{table*}[th!]
\centering
\resizebox{0.9\textwidth}{!}{%
\begin{tabular}{cccccc}
\toprule
\multirow{3}{*}{Architecture} & \multirow{3}{*}{Parameters} & \multirow{3}{*}{\#Blocks} & \multicolumn{3}{c}{Test error $\downarrow$} \\
\cmidrule(lr){4-6}
& & & $\sigma^2_{\max}=0$ & $(\sigma^2_{\max})^i=0.004$ & $(\sigma^2_{\max})^t=0.004$ \\
\midrule
FNO & 2.37M & 1 & 0.184 $\pm$ 0.002 & 0.238 $\pm$ 0.008 & 0.179 $\pm$ 0.004\\
FNO & 4.15M & 2 & 0.162 $\pm$ 0.024 & 0.196 $\pm$ 0.011 & 0.151 $\pm$ 0.010\\
FNO & 7.71M & 4 & 0.157 $\pm$ 0.012 & 0.216 $\pm$ 0.002 & 0.158 $\pm$ 0.009\\
\midrule
FNO++ & 2.37M & 1 & 0.199 $\pm$ 0.001 & 0.255 $\pm$ 0.002 & 0.197 $\pm$ 0.004 \\
FNO++ & 4.15M & 2 & 0.154 $\pm$ 0.005 & 0.188 $\pm$ 0.006 & 0.157 $\pm$ 0.006 \\
FNO++ & 7.71M & 4 & 0.151 $\pm$ 0.003 & 0.184 $\pm$ 0.008 & 0.147 $\pm$ 0.004\\
\midrule
FNO-WT & 2.37M & 1 & 0.151 $\pm$ 0.007 & 0.183 $\pm$ 0.026 & 0.129 $\pm$ 0.018 \\
FNO-DEQ & 2.37M &  1 & \textbf{0.128 $\pm$ 0.004} & \textbf{0.159 $\pm$ 0.005} & \textbf{0.121 $\pm$ 0.015} \\
\bottomrule
\end{tabular}}
\caption{\textbf{Results on incompressible Steady-State Navier-Stokes (viscosity=0.001)}: clean data (Col 4), noisy inputs (Col 5) and noisy observations (Col 6) with max variance of added noise being $(\sigma^2_{\max})^i$ and $(\sigma^2_{\max})^t$, respectively. Reported test error has been averaged on three different runs with seeds 0, 1, and 2. \\$\ddagger$ indicates that the network diverges during training for one of the seeds.}\label{table:results-navier-stokes-visc-0.001-nl-0.004}
\end{table*}

\begin{table*}[th!]
\centering
\resizebox{0.9\textwidth}{!}{%
\begin{tabular}{cccccc}
\toprule
\multirow{3}{*}{Architecture} & \multirow{3}{*}{Parameters} & \multirow{3}{*}{\#Blocks} & \multicolumn{3}{c}{Test error $\downarrow$} \\
\cmidrule(lr){4-6}
& & & $\sigma^2_{\max}=0$ & $(\sigma^2_{\max})^i=0.004$ & $(\sigma^2_{\max})^t=0.004$ \\
\midrule
FNO & 2.37M & 1 & 0.181 $\pm$ 0.005 & 0.207 $\pm$ 0.003 & 0.178 $\pm$ 0.008 \\
FNO & 4.15M & 2 & 0.138 $\pm$ 0.007 & 0.163 $\pm$ 0.003 & 0.137 $\pm$ 0.006 \\
FNO & 7.71M & 4 & 0.152 $\pm$ 0.006 &  0.203 $\pm$ 0.055 & 0.151 $\pm$ 0.008 \\
\midrule
FNO++ & 2.37M & 1 & 0.188 $\pm$ 0.002 & 0.217 $\pm$ 0.001 & 0.187 $\pm$ 0.005 \\
FNO++ & 4.15M & 2 & 0.139 $\pm$ 0.004 & 0.170 $\pm$ 0.005 & 0.138 $\pm$ 0.005 \\
FNO++ & 7.71M & 4 & 0.130 $\pm$ 0.005 & 0.168 $\pm$ 0.007 & 0.126 $\pm$ 0.007 \\
\midrule
FNO-WT & 2.37M & 1 & 0.099 $\pm$ 0.007 & 0.159 $\pm$ 0.029 & 0.123 $\pm$ 0.023 \\
FNO-DEQ & 2.37M & 1 & \textbf{0.088 $\pm$ 0.006} & \textbf{0.104 $\pm$ 0.001} & \textbf{0.116 $\pm$ 0.005} \\
\bottomrule
\end{tabular}}
\caption{\textbf{Results on incompressible Steady-State Navier-Stokes (viscosity=0.01)}: clean data (Col 4), noisy inputs (Col 5) and noisy observations (Col 6) with max variance of added noise being $(\sigma^2_{\max})^i$ and $(\sigma^2_{\max})^t$, respectively. Reported test error has been averaged on three different runs with seeds 0, 1, and 2. \\$\ddagger$ indicates that the network diverges during training for one of the seeds.} \label{table:results-navier-stokes-visc-0.01-nl-0.004}
\end{table*}

\paragraph{Convergence analysis of fixed point.} We report variations in test error, absolute residual $\| G_\theta(\rvz_t) - \rvz_t\|_2$, and relative residual $\frac{\| G_\theta(\rvz_t) - \rvz_t\|_2}{\| \rvz_t \|_2}$ with an increase in the number of solver steps while solving for the fixed point in FNO-DEQ, for both Darcy Flow (See~\cref{tab:darcy_flow_convergence}) and Steady-State Navier Stokes (See~\cref{tab:ns_0.01_convergence}). We observe that all these values decrease with increase in the number of fixed point solver iterations and eventually saturate once we have a reasonable estimate of the fixed point. 
We observe that increasing the number of fixed point solver iterations results in a better estimation of the fixed point. 
For steady state PDEs, we expect the test error to reduce as the estimation of the fixed point improves. 
Furthermore, at inference time we observe that the test error improves (i.e. reduces) with increase in the number of fixed point solver iterations even though the FNO-DEQ is trained with fewer solver steps.
For Navier-Stokes with viscosity 0.01, at inference time we get a test MSE loss of 0.0744 with 48 solver steps from 0.0847 when used with 24 solver steps.

This further bolsters the benefits of DEQs (and weight-tied architectures in general) for training neural operators for steady-state PDEs. Moreover, performance saturates after a certain point once we have a reasonable estimate of the fixed point, hence showing that more solver steps stabilize to the same solution.

\begin{table}[th!]
    \centering
    \begin{tabular}{cccc}
    \toprule
    Solver steps & Absolute residual $\downarrow$  & Relative residual $\downarrow$  & Test Error $\downarrow$ \\
    \midrule
    2 & 212.86 & 0.8533 & 0.0777 \\
    4 & 18.166 & 0.0878 & 0.0269 \\
    8 & 0.3530 & 0.00166 & 0.00567 \\
    16 & 0.00239 & 1.13e-5 & 0.00566 \\
    32 & 0.000234 & 1.1e-6 & 0.00566 \\
    \bottomrule \\
    \end{tabular}
    \caption{Convergence analysis of fixed point for noiseless Darcy Flow: The test error, absolute residual $\| G_\theta(\rvz_t) - \rvz_t\|_2$ and relative residual $\frac{\| G_\theta(\rvz_t) - \rvz_t\|_2}{\| \rvz_t \|_2}$ decrease with increase in the number of fixed point solver iterations. The performance saturates after a certain point once we have a reasonable estimate of the fixed point. We consider the noiseless case, where we do not add any noise to inputs or targets. }
    \label{tab:darcy_flow_convergence}
\end{table}

\begin{table}[th!]
    \centering
    \begin{tabular}{cccc}
    \toprule
    Solver steps & Absolute residual $\downarrow$  & Relative residual $\downarrow$ & Test Error $\downarrow$ \\
    \midrule
    4 & 544.16 & 0.542 & 0.926 \\
    8 & 397.75 & 0.408 & 0.515 \\
    16 & 150.33 & 0.157 & 0.147 \\
    24 & 37.671 & 0.0396 & 0.0847 \\
    48 & 5.625 & 0.0059 & 0.0744 \\
    64 & 3.3 & 0.0034 & 0.0746 \\
    \bottomrule
    \end{tabular}
    \caption{Convergence analysis of fixed point for noiseless incompressible Steady-State Navier-Stokes with viscosity=0.01: The test error, absolute residual $\| G_\theta(\rvz_t) - \rvz_t\|_2$ and relative residual $\frac{\| G_\theta(\rvz_t) - \rvz_t\|_2}{\| \rvz_t \|_2}$ decrease with increase in the number of fixed point solver iterations. The performance saturates after a certain point once we have a reasonable estimate of the fixed point. We consider the noiseless case, where we do not add any noise to inputs or targets.}
    \label{tab:ns_0.01_convergence}
\end{table}

\begin{figure}[!t]
    \centering
    \begin{subfigure}[b]{0.7\textwidth}
        \centering
        \includegraphics[width=1\linewidth]{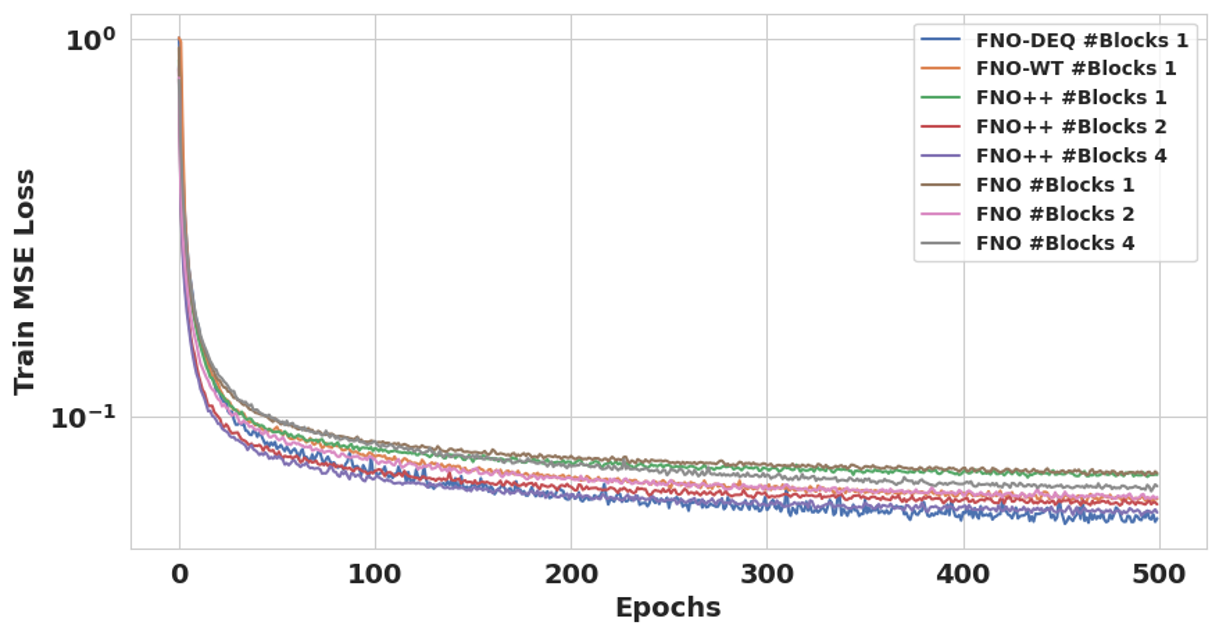}
        \caption{Training Loss Curve}
    \end{subfigure}
    \hfill
    \begin{subfigure}[b]{0.7\textwidth}
        \centering
        \includegraphics[width=1\linewidth]{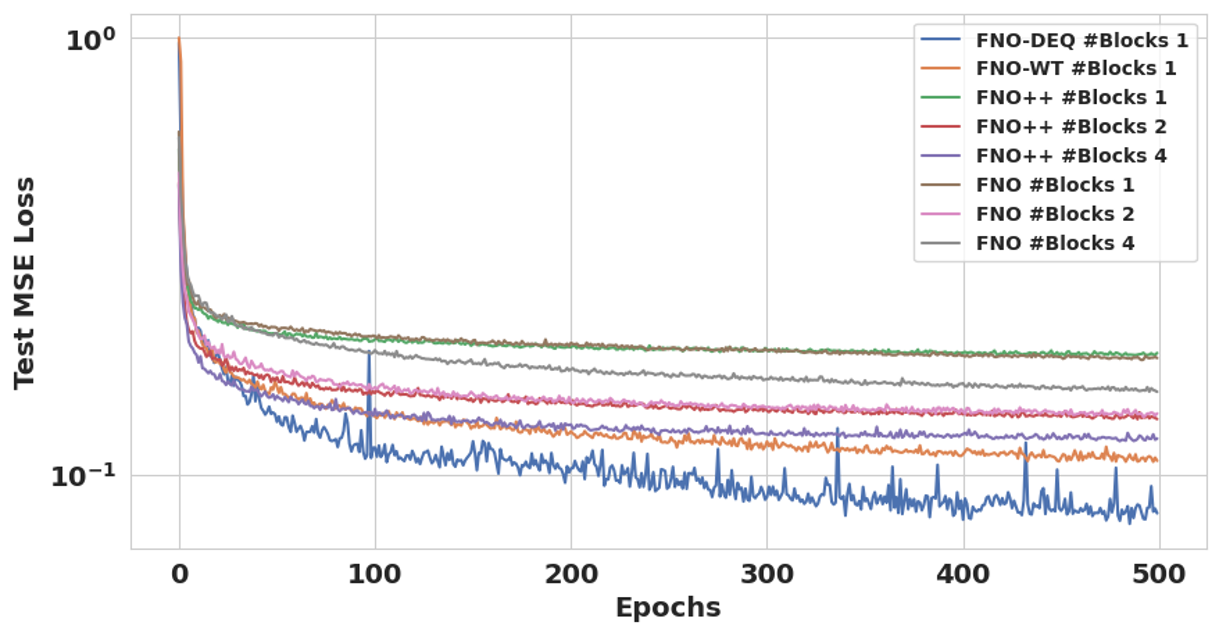}
        \caption{Test Loss Curve}
        \end{subfigure}
    \caption{Training and Test Loss Curves for Steady-State Navier-Stokes with viscosity $0.01$. The $x$ axis is the number of epochs and $y$ axis is the MSE loss in $\log$ scale. 
    Note that while all the models converge to approximately the same MSE loss value while training, DEQs and weight-tied networks get a better test loss in fewer epochs.
    }
    \label{fig:train_val_error}
\end{figure}

\begin{table*}[th!]
\centering
\resizebox{0.95\textwidth}{!}{%
\begin{tabular}{cccccc}
\toprule
\multirow{3}{*}{Architecture} & \multirow{3}{*}{Parameters} & \multirow{3}{*}{\#Blocks} & \multicolumn{3}{c}{Test error $\downarrow$} \\
\cmidrule(lr){4-6}
& & & $\sigma^2_{\max}=0$ & $(\sigma^2_{\max})^i=0.001$ & $(\sigma^2_{\max})^t=0.001$ \\
\midrule
FNO & 2.37M & 1 & 0.0080 $\pm$ 5e-4 & 0.0079 $\pm$ 2e-4  &  0.0125 $\pm$ 4e-5 \\
FNO & 4.15M & 2 & 0.0105 $\pm$ 6e-4 & 0.0106 $\pm$ 4e-4 & 0.0136 $\pm$ 2e-5 \\
FNO & 7.71M & 4 & 0.2550 $\pm$ 2e-8 & 0.2557 $\pm$ 8e-9 & 0.2617 $\pm$ 2e-9 \\
\midrule
FNO++ & 2.37M & 1 & 0.0075 $\pm$ 2e-4 & 0.0075 $\pm$ 2e-4 &  0.0145 $\pm$ 7e-4 \\
FNO++ & 4.15M & 2 & 0.0065 $\pm$ 2e-4 & 0.0065 $\pm$ 9e-5 & 0.0117 $\pm$ 5e-5 \\
FNO++ & 7.71M & 4 & 0.0064 $\pm$ 2e-4 & 0.0064 $\pm$ 2e-4 & \textbf{0.0109 $\pm$ 5e-4}  \\
\midrule
FNO-WT & 2.37M & 1 & \textbf{0.0055 $\pm$ 1e-4} & \textbf{0.0056 $\pm$ 5e-5} & 0.0112 $\pm$ 4e-4 \\
FNO-DEQ & 2.37M &  1 & \textbf{0.0055 $\pm$ 1e-4} & \textbf{0.0056 $\pm$ 7e-5} & 0.0112 $\pm$ 4e-4 \\
\bottomrule
\end{tabular}}
\caption{Results on Darcy flow: clean data (Col 4),noisy inputs (Col 5) and noisy observations (Col 6) with max variance of added noise being $(\sigma^2_{\max})^i$ and $(\sigma^2_{\max})^t$, respectively. Reported test error has been averaged on three different runs with seeds 0, 1, and 2.
Here, S-FNO++, S-FNO-WT and S-FNO-DEQ are shallow versions 
of FNO++, FNO-WT and FNO-DEQ respectively.
}
\label{table:results-darcy-flow-all}
\end{table*}
\vspace{-2mm}

\end{document}